\definecolor{bubbles}{rgb}{0.91, 1.0, 1.0}
\theoremstyle{plain}
\newtheorem{theorem}{Theorem}[section]
\newtheorem{proposition}[theorem]{Proposition}
\theoremstyle{definition}
\theoremstyle{remark}
\definecolor{LightCyan}{rgb}{0.88,1,1}
\newcolumntype{b}{>{\columncolor{white}}c}
\def\eqref#1{equation~\ref{#1}}
\def\1{\bm{1}}
\newcommand{\R}{\mathbb{R}}
\newcommand{\davg}{\mathsf{AD}_{d}}
\newcommand{\cavg}{\mathsf{AD}_{c}}
\newcommand{\map}{\mathsf{mAP}}
\newcommand{\heter}{\times \mathcal{R}}
\newcommand{\tavg}{\mathsf{AD}_{\Delta}}
\newcommand\Tstrut{\rule{0pt}{2.2ex}}         
\newcommand\Bstrut{\rule[-1.4ex]{0pt}{0pt}} 
\author{ Francesco Di Giovanni\footnote{ Equal contribution.} \\ {\scriptsize Twitter Inc.} \\ {\footnotesize fdigiovanni@twitter.com ~~}  \and   {Giulia Luise$^\ast$} \\ {\scriptsize University College London} \\ {\footnotesize g.luise.16@ucl.ac.uk ~~} \and  Michael M. Bronstein \\ {\scriptsize Twitter Inc., University of Oxford} \\ {\footnotesize mbronstein@twitter.com ~~}  }
\date{}
\title{\LARGE\bf Heterogeneous manifolds for curvature-aware \\ graph embedding\vspace{1em}} 
\begin{document}
\maketitle

\vskip 0.3in




\begin{abstract}
Graph embeddings, wherein the nodes of the graph are represented by points in a continuous space, are used in a broad range of Graph ML applications. 
The quality of such embeddings crucially depends on whether the geometry of the space matches that of the graph. 
Euclidean spaces are often a poor choice 
for 
many types of real-world graphs, where hierarchical structure and a power-law degree distribution are linked to negative curvature. 
 In this regard, it has recently been shown that hyperbolic spaces 
 and more general manifolds, such as products of constant-curvature spaces and matrix manifolds, are advantageous to approximately match nodes pairwise distances. 
%
%
%
However, all these classes of manifolds are {\em homogeneous}, implying that the curvature distribution is the same at each point, making them   
unsuited to match the local curvature (and related structural properties) 
of the graph. 
%
%
%
%
%
%
In this paper, we study graph embeddings in a broader class of \textit{heterogeneous} rotationally-symmetric manifolds. By adding a single extra radial dimension to any given existing homogeneous model, we can both account for heterogeneous curvature distributions on graphs and pairwise distances. We evaluate our approach on reconstruction tasks on synthetic and real datasets and show its potential in better preservation of high-order structures and heterogeneous random graphs generation.    
\end{abstract}

%


\section{Introduction}
Embedding data into a continuum space is at the heart of representation learning. Data are then manipulated and processed in some downstream task and their performance is usually impacted by the power of the representation. For quite some time continuum space was just a synonym of Euclidean space and the main perspective amounted to assuming that data mapped to high-dimensional vectors likely lived on a smaller but \emph{generally curved} embedded submanifold. This idea, known as \emph{manifold assumption} \cite{bengio2013representation}, has inspired many traditional manifold learning algorithms \cite{roweis2000nonlinear, tenenbaum2000global, belkin2001laplacian}.

Recently, a new trend has emerged of encoding the geometry of the data directly into a richer ambient manifold rather than implicitly reconstructing it based on the manifold assumption. This approach has become particularly popular in the context of graphs embeddings. 
Graphs describing many natural systems of relations and interactions 
often exhibit similar properties such as power-law degree distribution and hierarchical structures, that 
are associated with 
hyperbolic geometry \cite{krioukov2010hyperbolic, sarkar2011low}. It is thus not surprising 
that hyperbolic embeddings 
turned to be beneficial for reconstruction tasks such as link-prediction even in low-dimension \cite{nickel2017poincare, chamberlain2017neural, sala2018representation}. The improved performance is due to the ambient space better matching structural properties of the input graph: this greater flexibility of the hyperbolic spaces is encoded in their \emph{curvature} information, contrarily to the flat Euclidean setting. Such findings have sparked interest in exploring different manifold classes such as products of constant curvature spaces (\cite{gu2018learning}, generalizing \citet{wilson2014spherical}), and matrix manifolds \cite{cruceru2020computationally}, that could better accommodate the structural properties of graphs. 

\begin{wrapfigure}{r}{0.5\linewidth}
\centering
\vspace{-10.5mm}\hspace{-2mm}
\includegraphics[width=1.1\linewidth]{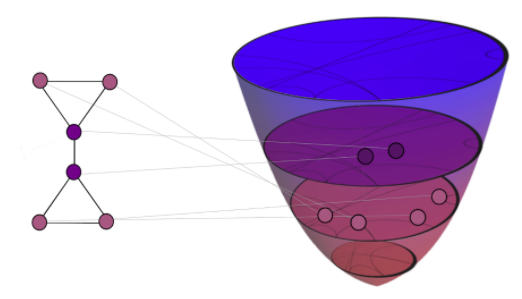}\vspace{-5mm}
\caption{Left: nodes colored according to discrete curvature. Right: a heterogeneous manifold proposed in this paper, colored by its scalar curvature, and the embedding of the graph into it.\vspace{-1mm} 
}
\label{fig:first_page}
\end{wrapfigure}

Only very recently these approaches have been shown to be special instances of graph embeddings into symmetric manifolds \cite{lopez2021symmetric}. This family of manifolds are usually amenable to optimization techniques, 
partly because all points `look the same', a feature known as \emph{homogeneity}. Homogeneity often allows for closed and tractable formulas for distances and exponential maps that are required for Riemannian gradient descent algorithms \cite{wilson2018gradient}, but at the same time makes the ambient space `stiff' since its curvature  is position-independent. 
On the other hand, real-world graphs are often {\em heterogeneous}, where clustering and density generally vary from node to node. This local heterogeneous geometry can be encoded in discrete 
curvature \cite{forman2003discrete, ollivier2007ricci, ollivier2009ricci}, which have recently been used to detect communities \cite{ni2019community} and 
improve information propagation in graph neural networks \cite{topping2021understanding}. 

We propose to go beyond the common strategy of simply minimizing distance-based losses and instead make our embedding \emph{curvature-aware}, by jointly matching both pairwise distances and node-wise curvature information with pointwise curvature on the manifold. This allows us to directly access structural information about the input graph from the \emph{local properties of the manifold} rather than simply from the configuration of the embedded nodes. 

\paragraph{Contributions.} 
In this paper, we propose what to our knowledge is the first method that preserves both distance and curvature information in building graph representations. To this aim, we study a new class of heterogeneous manifolds consisting of a product of a homogeneous factor and a spherically symmetric one. We show that this family generalize \emph{any} homogeneous embedding studied so far and allow to both match the discrete curvature distribution and retain the computational tractability of standard approaches (something \emph{generally lacking} on heterogeneous manifolds). 
We investigate this family of spaces in detail and show that classical optimization techniques, such as Riemannian Stochastic Gradient Descent, extend to our heterogeneous spaces by computing a \emph{single} additional derivative.
We test this new flexible approach on standard reconstruction tasks showing that it minimizes classical metrics of average distance distortion and matches the node-wise discrete curvature of the graph with the pointwise one on the ambient space (see Figure \ref{fig:first_page}). We conduct preliminary experiments that show the potential of our approach for triangles estimation and manifold random graphs.

\paragraph{Related work.} 
Approximate isometric embeddings of graphs and similar objects have been extensively studied in theoretical geometry \cite{gromov1981structures,linial1995geometry,indyk20178} and in computer graphics applications \cite{memoli2005theoretical,bronstein2006generalized}. 
The recent trend of `geometric machine learning' \cite{bronstein2017geometric,bronstein2021geometric} attempts to incorporate geometric inductive biases such as symmetry and equivariance into deep neural networks and more broadly, leverage the geometric structure of the data and learning tasks. 
%
In the context of graphs, much of this research area is propelled by the success of graph neural networks \cite{sperduti1994encoding, goller1996learning, scarselli2008graph, bruna2013spectral}, where the role of non-Euclidean geometry both in the form of representation of the node features \cite{chami2019hyperbolic, liu2019hyperbolic, bachmann2020constant} and tool to investigate limitations of existing architectures \cite{blend,topping2021understanding} is emerging. In general, homogeneous manifolds other than Euclidean space have already been investigated in deep architectures \cite{huang2018building}, for attention mechanisms \cite{gulcehre2018hyperbolic}, general optimization frameworks \cite{bonnabel2013stochastic,becigneul2018riemannian} and  variational autoencoders \cite{skopek2019mixed}, to mention few. Our work continues the line of research of \citet{wilson2014spherical, nickel2017poincare, gu2018learning, cruceru2020computationally, lopez2021symmetric} and generalizes these methods by proposing a new class of ambient spaces and embedding algorithms.  

\section{Graph embedding into manifolds}


\paragraph{The setting.} We consider an undirected graph $G = (V,E)$ with $n$ nodes. For $i\in V$ we let $\mathcal{N}_{i}$ be the \emph{neighbourhood} of $i$ and  $d_i = \lvert \mathcal{N}_{i}\rvert$ the \emph{degree} of $i$. 
The {\em (geodesic) distance}\footnote{Distances are often termed `metrics'. Here, we prefer the term `distance' to avoid confusion with Riemannian metrics. } $d_{G}(i,j)$ is the 
length of the shortest walk connecting nodes $i$ and $j$. 
We are interested in finding a target space $M$ equipped with metric $d_M$ and an embedding $f:V\rightarrow M$ so that the graph can be \emph{reconstructed} from $f$ and $M$. If there exists an \emph{isometric} map $f:V\rightarrow M$, i.e. satisfying $d_{M}(f(i),f(j)) = d_{G}(i,j)$ for all $i,j\in V$, 
then we can perfectly reconstruct the input data. However, given constraints on $M$ such as bounded dimension, a perfect isometric embedding is typically unavailable, and one tries to find a `least-distorting' embedding, in some sense: 
%
the \emph{average (distance) distortion} $\davg(f)$ 
\begin{equation}\label{eq:average_distance_dist} 
\davg(f) := \frac{2}{n(n-1)}\sum_{i,j = 1}^{n}\left\vert 1 - \frac{ d_{M}(f(i),f(j))}{d_{G}(i,j)}\right\vert
\end{equation}
\noindent and the \emph{mean average precision} $\map$ 
\begin{equation}\label{eq:map}
\map(f) := \frac{1}{n}\sum_{i\in V}\frac{1}{d_{i}}\sum_{j\in\mathcal{N}_{i}}\frac{\lvert \mathcal{N}_{i} \cap \mathcal{R}_{i,j}\rvert}{\lvert \mathcal{R}_{i,j}\rvert},
\end{equation}
\noindent are two common criteria, where $\mathcal{R}_{i,j}$ is the set of nodes $z\in V$ such that $d_{M}(f(i),f(z)) \leq d_{M}(f(i),f(j))$. 
We note that while $\davg(f)$ is affected by pairwise distances beyond the 1-hop neighbourhood, the $\map$ is a measure of how well an embedding is able to reproduce the 1-hop neighbourhood of a node disregarding the actual scale of distances.  We refer to Section 4 in \citet{cruceru2020computationally} for a thorough discussion on the matter of choosing the right criterion for embedding distortion.

In general, distortions are inevitable and tend to accumulate on higher-order structures \cite{VERBEEK20161}, which are important in many practical applications such as social networks \cite{benson2016higher} and physical systems \cite{battiston2020networks}. In this case, it is desirable to go beyond pairwise distances and access other types of information on the ambient space to better reconstruct the input data. Discrete graph curvature \cite{forman2003discrete,ollivier2007ricci} is one way of accounting for such structures. In the rest of the paper, we study  embeddings that can both minimize $\davg$ (or maximize $\map$) and account for local graph structural properties by matching the graph curvature distribution with that of a suitable class of target spaces. 



\subsection{Riemannian manifolds} 

A natural class of continuous embedding spaces for  graphs are Riemannian manifolds \cite{petersen2006riemannian}, since they come with a differentiable structure and are hence amenable to optimization methods. Informally, a $d$-dimensional manifold $M$ is a topological space that can be locally identified with Euclidean space via smooth maps: hence for every point $p\in M$ there exists an associated {\em tangent space} $T_{p}M \cong\mathbb{R}^d$.  
Assume we are given a positive-definite inner product $g_{p}:T_{p}M\times T_{p}M\rightarrow \R$. 
If the assignment $p\mapsto g_{p}$ is smoothly compatible with the differentiable structure of $M$, we refer to $g$ as a \emph{Riemannian metric (tensor)} on $M$. 
\paragraph{Geodesics.} The Riemannian metric $g$ induces a distance function $d_{g}$ that measures the length of minimal paths on the manifold $M$ called \emph{geodesics}. An important property of the distance is that $d_{g}^{2}(\cdot,p)$ is smooth locally\footnote{Namely, away from the cut-locus \cite{petersen2006riemannian}.} around $p$, meaning that any loss $\mathcal{L}$ depending on $d^{2}_{g}$ is locally smooth on $M$ and can hence be optimized by first order methods. 
\paragraph{Exponential map.} 
%
Given $v\in T_{p}M$, the {\em exponential map} $\mathrm{exp}_{p}:T_{p}M\rightarrow M$ yields the point in $M$ obtained by travelling for unit time along the geodesic starting at $p$ with initial speed $v$. 
The exponential map plays a key role in optimization on manifolds, allowing 
to update an embedding at $p$ based on gradients of the loss living in the tangent space. 

\paragraph{Embedding.}
The problem of isometric (metric-preserving) embeddings of 
discrete metric spaces (and graph in particular) has been extensively studied both in theoretical and applied literature \cite{linial1995geometry,indyk20178,johnson1984extensions}.
%
%
%
In general, a graph cannot be isometrically embedded into a fixed space; the structure and the dimension of the embedding space have a crucial effect on the embedding distortion. 
Typically, increasing the dimension of the space allows to reduce the distortion, however, it  comes at the expense of memory and computational cost. 
For this reason, one often seeks a lower-dimensional space with `richer' structure that is better suited for the graph. 
%
When using Riemannian manifolds for graph embeddings, 
the `richness' of the space $M$ is manifested in its \emph{curvature}, which we define next. 

\paragraph{Curvature.} For each point $p$ in $M$, and for each pair of linearly independent tangent vectors $u,v\in T_{p}M$, the \emph{sectional curvature} $K_{p}(u,v)$ at $p$ is the Gaussian curvature (product of the minimal and maximal curvatures) of the surface spanned by $\exp_{p}(\{u,v\})$. Given a tangent vector $v$ at $p$, if we `average' the sectional curvatures at $p$ over a set of orthonormal vectors we obtain a bilinear form $\mathrm{Ric}_{p}:T_{p}M\times T_{p}M\rightarrow \R$ called \emph{Ricci curvature}. 
This bilinear map is related to the volume growth rate and the propagation of information \cite{petersen2006riemannian}[Chapter 9]. By computing the trace of $\mathrm{Ric}$, we finally obtain a map $\mathrm{R}:M\rightarrow \R$ called \emph{scalar curvature}. This is the simplest curvature term one can associate with a manifold and the most natural quantity to adopt when fitting the node-wise curvature on a graph.


\subsection{Homogeneous vs heterogeneous manifolds}\label{subsection_homogeneous} For a point $p\in M$, the sectional curvatures $K_{p}:T_{p}M\times T_{p}M\rightarrow\R$ 
encode the local geometry around $p$. When $K_{p}$ is constant (in the sense that there exists $K\in\R$ such that  $K_{p}(u,v) = K$ for any $p\in M$  and $u,v, \in T_{p}M$), then, up to quotients, $M$ is either a sphere ($K > 0$), a Euclidean space ($K = 0)$, or a hyperbolic space ($K < 0$). We refer to this special class as \emph{space-forms}. 
Besides the ubiquitous Euclidean spaces, by far the most common choice for embeddings, space-forms with negative curvature have recently gained popularity for graph representation learning \cite{chamberlain2017neural,nickel2017poincare}. 

More general than space-forms are \emph{homogeneous manifolds}. This class is characterized by the following property: for any points $p,q\in M$ there exists an isometry mapping $p$ to $q$. This means that an observer cannot distinguish the point they are at based on the surrounding geometry. From the isometry-invariance of the curvature it follows that on a homogeneous manifold the sectional curvatures are only functions of the tangent vectors but \emph{not} of the base-point, meaning that $K_{p}(\cdot,\cdot) = K(\cdot,\cdot)$ and hence that the scalar curvature is \emph{constant}. Recently, manifolds other than space forms have been investigated to better accommodate graphs, as products of space-forms \cite{gu2018learning} and matrix manifolds \cite{cruceru2020computationally, lopez2021vector}. All these spaces fall in the homogeneous class (see Appendix \ref{appendix_sec_Riemann}),
implying that 
one {\em cannot} leverage their curvature to encode \emph{any node-wise} graph information. 
 
To overcome this rigidity one has to drop the homogeneity requirement and consider more general manifolds, i.e. with non-constant (scalar) curvature, here simply referred to as \emph{heterogeneous} (see Figure \ref{figure:hom_vs_heter}). In \cref{sec:het_manifolds} we show that a subclass of such manifolds are good candidates for embeddings that also account for discrete curvature on graphs.





\begin{figure*}[t!]
    \centering
 
\begin{subfigure}[Space forms (sphere, plane, hyperboloid)]{%
\label{fig:spaceforms}
\includegraphics[height=3.5cm]{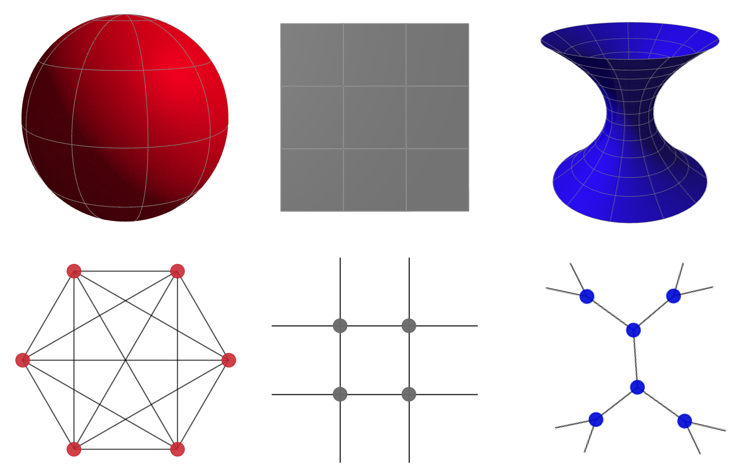}}%
\end{subfigure}
\hspace{10mm}
\begin{subfigure}[Product manifold]{%
\label{fig:productman}
\includegraphics[height=3.5cm]{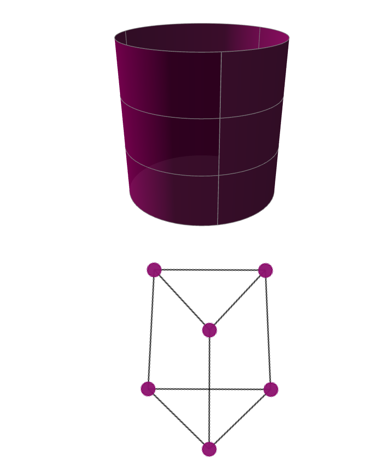}}%
\end{subfigure}
 \hspace{10mm}
\begin{subfigure}[Heterogeneous manifold]{%
\label{fig:hetman}
\hspace{3mm}\includegraphics[height=3.5cm]{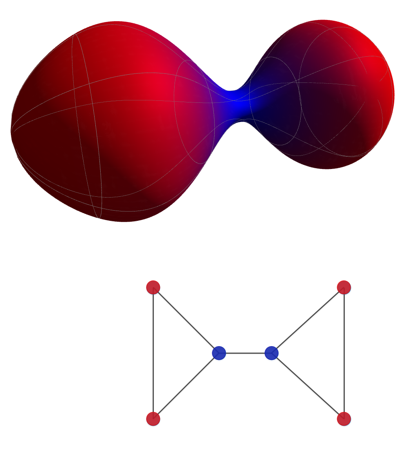}\hspace{3mm}}%
\end{subfigure}
\vspace{-2mm}
 \caption{\emph{Homogeneous} manifolds include (a) space-forms and (b) products of space-forms (e.g. cylinder) and have constant scalar curvature. Below their graph counterparts, with (b) a product of an edge with a triangle, have constant node-wise Forman curvature. A \emph{heterogeneous} manifold (c) has non-constant scalar curvature and its graph counterpart has varying node-wise Forman curvature. } \label{figure:hom_vs_heter}
\end{figure*}

\subsection{Curvature on graphs}

Although a graph does not come with a differential structure, synthetic notions of graph curvature have been introduced, most notably, by \citet{forman2003discrete} and \citet{ollivier2007ricci, ollivier2009ricci}. In both cases, the idea is to consider an edge-based map that can recover some aspects of the Ricci curvature on manifolds, 
%
including relations to the 
volume growth rate 
\cite{paeng2012volume}. Discrete curvatures have been recently used in clustering algorithms \cite{ni2019community} and to detect topological bottlenecks inside a graph that may harm the performance of graph neural networks \cite{topping2021understanding}. While the model we present here works with any notion of  curvature on graphs, we focus on arguably the simplest and most computationally-efficient construction that we define next. 

\paragraph{Augmented Forman curvature.} Following \citet{samal2018comparative} we define the {\em $\gamma$-augmented Forman curvature} of an unweighted\footnote{Generalizations of such formula to weighted graphs exist and can be easily extended to our framework.} graph $G = (V,E)$ as the map $\mathrm{F}:E\rightarrow\R$
\begin{equation}\label{eq:Ricci_Forman}
\mathrm{F}(i,j) = 4 - d_{i} - d_{j} + 3\gamma\,\sharp_{\Delta}(i,j), \quad \gamma > 0,
\end{equation}
\noindent where $d_{i},d_{j}$ are the degrees of $i,j\in V$ respectively and $\sharp_{\Delta}(i,j)$ is the number of triangles based at the edge $(i,j)$. We note that $\gamma$ regulates the contribution of triangles and is generally set equal to one. 

\begin{wrapfigure}{r}{0.5\linewidth}
    \centering
      \includegraphics[scale=0.4]{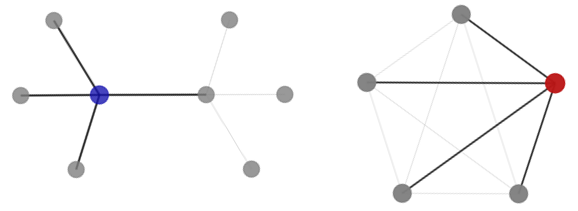}

    \caption{Nodes with same degree but different curvature.  
    }
    \vspace{-4mm}
    \label{fig:degree_vs_curv}
\end{wrapfigure}

We highlight how in spirit the Forman curvature is related to the dispersion of edges (geodesics) similarly to the continuous setting: if edges departing from adjacent nodes $i\sim j$ \emph{converge} in common neighbours (they form triangles), then the graph is \emph{positively curved} along the edge $i\sim j$, in analogy with the spherical case where geodesics starting parallel at the equator converge at the pole. Similarly, trees constitute the discrete analogy of hyperbolic spaces. 


As for manifolds, we can trace over the edges passing through $i$ to compute the node-wise Forman scalar curvature:
\begin{equation}\label{eq:Forman_scal}
\mathrm{F}(i) = \frac{1}{d_{i}}\sum_{j:j\sim i}\mathrm{F}(i,j)
\end{equation}
\noindent The signal $\mathrm{F}:V\rightarrow \R$ encodes information about the 2-hop neighbourhood of a node. Indeed, we can have nodes with same degrees but very different (scalar) Forman curvatures describing distinct geometric configurations as in Figure \ref{fig:degree_vs_curv}.

\paragraph{Distortion of embeddings and the role of curvature.} In principle, if we are able to find an isometric embedding $f$ of $G$ into some space $M$, meaning that the average distance distortion $\davg$ vanishes, then we have preserved all the information and can thus fully reconstruct the graph. We have already noted though how distortions are often unavoidable for a given ambient space \cite{VERBEEK20161}. Partly motivated by these findings, we wish to leverage the structural information encoded in the Forman curvature -- which is particularly relevant for higher order structures as cliques --
to find embeddings that minimize $\davg$ \textit{and} match the node-wise curvature of the graph with the point-wise curvature of the manifold. 
As recalled in \cref{subsection_homogeneous}, homogeneous manifolds, where the curvature is independent of the base point, are only suitable for graphs where all the nodes have the same scalar Forman curvature (e.g. cliques, cycles, or product graphs).
 For more general (heterogenous) graphs, we need to consider embeddings into heterogeneous manifolds where the curvature information changes at each point, which we discuss next. 

\section{Spherically symmetric heterogeneous spaces}\label{sec:het_manifolds}
The family of manifolds that we study are characterized by two features: a product structure and spherical\footnote{We use \emph{rotational} and \emph{spherical} interchangeably.} symmetry. These two elements play a key role in ensuring that curvature and distance can be computed in \emph{closed formulas}, something generally uncommon on manifolds.  
\paragraph{Product manifolds.}
Given two Riemannian manifolds $(M_{1},g_{1})$ and $(M_{2},g_{2})$, their Cartesian product $M := M_{1}\times M_{2}$ can be equipped with a standard Riemannian structure $g := g_{1}\oplus g_{2}$. 
\noindent The product structure allows to easily compute relevant quantities such as distance, exponential map and scalar curvature from each factor. The squared distance function on $(M,g)$ and the exponential map are given by  
\begin{align}\label{distanceformula}
& d_{g}^{2}((p_{1},p_{2}),(q_{1},q_{2})) = d_{g_{1}}^{2}(p_{1},q_{1}) + d_{g_{2}}^{2}(p_{2},q_{2}),\\
\label{exponentialformula}
& \exp_{g}|_{(p_{1},p_{2})}(X) = (\exp_{g_{1}}|_{p_{1}}(X_{1}),\exp_{g_{2}}|_{p_{2}}(X_{2})),
\end{align}
respectively, where $p=(p_1,p_2)$ and $q=(q_1,q_2)$ are points on $M$.
The scalar curvature is simply given by the sum 
\begin{equation}\label{scalarcurvatureformula}
\mathrm{R}_{g}(p_{1},p_{2}) = \mathrm{R}_{g_{1}}(p_{1}) + \mathrm{R}_{g_{2}}(p_{2}).
\end{equation}
\noindent The above decomposition shows that for optimizing a distance and curvature dependent loss on a product manifold it suffices to follow the Riemannian gradient descent {\em on each factor separately}.

\paragraph{Rotationally symmetric manifolds.}
Consider polar coordinates $\{(r,\theta,\psi)\}$ in $\R^{3}$. We can write the Euclidean metric in such coordinates as
$g_{E} = dr^{2} + r^{2}(d\theta^{2} + \sin^{2}(\theta)d\psi^{2}) = dr^{2} + r^{2}g_{\mathbb{S}^{2}}$,
where $g_{\mathbb{S}^{2}}$ is the standard metric on the 2-sphere. We can generalize this construction to the class of metrics that are invariant under rotations and can hence be written in polar coordinates as 
\begin{equation}\label{eq:rot_symmetric_metric}
g_{\varphi} = dr^{2} + \varphi^{2}(r)g_{\mathbb{S}^{2}},
\end{equation}
\noindent for some smooth function $\varphi$. Some contraints on $\varphi$ are necessary to build a valid metric (see \cref{sec_append_rot_sym}). 
It is worth emphasizing that the choice $\varphi(r) = \sinh(r)$ recover the Hyperbolic space as well. The explicit formula for the scalar curvature of a rotationally symmetric metric on $\R^{3}$ is (see \cref{sec_append_rot_sym}):

\begin{equation}\label{eq:scalar_c_rs}
\mathrm{R}(r) = 2\left(\frac{-2\partial^{2}_{rr}\varphi}{\varphi} + \frac{1-(\partial_{r}\varphi)^{2}}{\varphi^{2}}\right)(r).     
\end{equation}

The curvature depends on the radial coordinate $r$, meaning that $r\mapsto \mathrm{R}(r)$ is \emph{non-constant} and hence that the resulting space is \emph{heterogeneous} (except for very particular choices of $\varphi$). We emphasize how the curvature is instead independent of angular coordinates $\theta,\psi$. 

We also note that given two points $p,q$ lying along the same ray, i.e. $p = (r_{0},\bar{\theta},\bar{\psi})$ and $q = (r_{1},\bar{\theta},\bar{\psi})$ for some angles $\bar{\theta},\bar{\psi}$, then we have a simple formula for their distance:
\begin{equation}\label{radialdistance}
d_{g}(p,q) = \lvert r_{1} - r_{0} \rvert.
\end{equation}

In the following, we pick one specific instance of rotationally symmetric space, 
by choosing the radial function $\varphi$ in \cref{eq:rot_symmetric_metric} of the form 
\begin{equation}\label{eq:explicit_varphi}
\varphi_{\alpha}(r) = \alpha \arctan\left(\frac{r}{\alpha}\right), \quad \alpha > 0.
\end{equation}



\begin{wrapfigure}{r}{0.35\linewidth}
    \centering\vspace{-6mm}
    \includegraphics[scale=0.25]{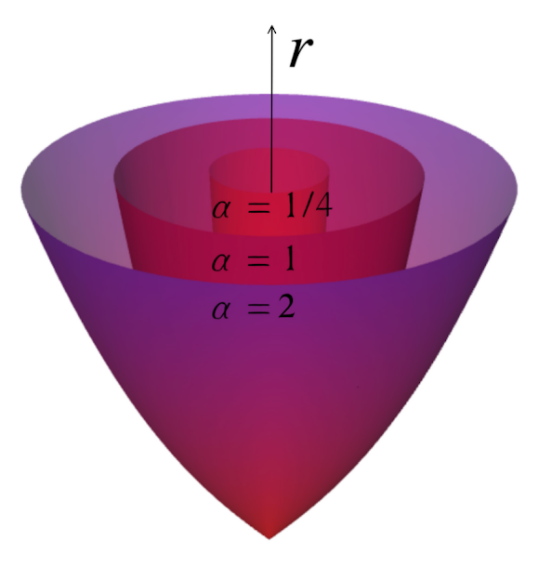}\vspace{-4mm}
    \caption{Visualisation of 3D rotationally symmetric manifolds for different $\alpha$.}
    \vspace{-3mm}
    \label{fig:role_alpha}
\end{wrapfigure}
\noindent While other options are viable, 
this choice is motivated by 
some convenient properties of the metric in \cref{eq:rot_symmetric_metric}. The resulting geometry resembles a hemisphere glued to a cylinder with positive monotone decreasing scalar curvature while $\alpha$ controls the radius of the hemisphere and hence how curved the space is (Figure \ref{fig:role_alpha}). In practice, the value $\alpha$ depends on two tunable hyperparameters that allow us to vary the range of curvatures on the manifold to match the discrete curvature of the embedded graph (see \cref{sec_append_algorithm}). Alternatively, $\alpha$ - or more generally $\varphi$ - could be learned based on the problem. 

\paragraph{Tractable heterogeneous manifolds.} Choose \emph{any} homogeneous manifold $(M_{h},g_{h})$ with a closed formula for the distance and consider a rotationally symmetric space (e.g. with $\varphi$ as in \cref{eq:explicit_varphi}). We build a heterogeneous space as the product $ M_{h}\times\R^{3}$ equipped with the metric $g_{h}\oplus g_{\varphi}$.

\section{Graph embedding in heterogeneous spaces}

We now describe how one can effectively rely on the structure of the heterogeneous manifolds introduced above to learn curvature aware graph representations (see \cref{fig:first_page}). 
%
%
%
\\Given a graph $G$ with node set $V$ and a manifold $M = M_{h}\times\R^{3}$, with $M_{h}$ a homogeneous space, we aim to find an embedding of the form
\[
V \ni x \mapsto y(x) = (z(x),r(x),\bar{\theta}),
\]
\noindent with $z(x)\in M_{h}$ and $(r(x),\bar{\theta})$ polar coordinates in $\R^{3}$ for some fixed angles $\bar{\theta}\in \mathbb{S}^{2}$. According to \cref{distanceformula} and \cref{radialdistance}, the squared distance between two points $y_{0} = (z_{0},r_{0},\bar{\theta})$ and $y_{1} = (z_{1},r_{1},\bar{\theta})$ is
\begin{equation}\label{eq:model_distance}
d_{g}^{2}((z_{0},r_{0},\bar{\theta}),(z_{1},r_{1},\bar{\theta})) = d_{h}^{2}(z_{0},z_{1}) + (r_{1} - r_{0})^{2},
\end{equation}
\noindent where $d_{h}$ is the distance on the homogeneous factor. On the other hand, from \cref{scalarcurvatureformula} we derive that for any $(z,r,\bar{\theta})$ in $M$ the following holds:
\begin{equation}\label{eq:model_curvature}
\mathrm{R}_{g}(z,r,\bar{\theta}) = \mathrm{R}_{g_{h}}(z) + \mathrm{R}_{\alpha}(r) = \mathrm{R}_{h} + \mathrm{R}_{\alpha}(r)
\end{equation}
\noindent where we have used that the scalar curvature of the homogeneous factor $\mathrm{R}_{h}$ is a constant and $\mathrm{R}_{\alpha}(r)$ is the scalar curvature of the rotationally symmetric factor given by \cref{eq:scalar_c_rs} with $\varphi_{\alpha}$ as in \cref{eq:explicit_varphi}. Note that by embedding the nodes along a ray in the rotationally symmetric factor, the angles enter neither the distance function nor the curvature one and are hence geometrically meaningless. Accordingly, we think of our embedding as simply adding a radial coordinate to our chosen homogeneous space $x\mapsto (z(x),r(x))$ to obtain a heterogeneous curvature now varying with $r$. Therefore, we \textbf{simplify our notation} and denote this class of embedding spaces by $M_{h}\heter$ to emphasize that \emph{there is only one additional dimension compared to the homogeneous baseline}. 
%
From now on, we tacitly assume that any heterogeneous graph embedding is of this form. 
Note in particular that if we embed the graph into $r = \text{const}$, we recover existing homogeneous models. In the following, we usually take $(M_{h},g_{h})$ to either be a space-form or a product thereof as in \citet{gu2018learning}.

\paragraph{Example.} Let $\mathbb{H}^{d}$ denote the standard $d$-dimensional hyperbolic space. Since the scalar curvature of $\mathbb{H}^{d}$ is given by $\mathrm{R}_{\mathbb{H}^{d}} = -d(d-1)$, if we consider for example 
an embedding $x\mapsto (z(x),r(x))\in \mathbb{H}^{5}\times \mathbb{H}^{5}\heter$, then \cref{eq:model_curvature} becomes
$\mathrm{R}(z(x),r(x)) = 2\left(-5\cdot 4\right) + \mathrm{R}_{\alpha}(r(x))$.

\paragraph{Loss function.} 
Similarly to \citet{nickel2017poincare, gu2018learning, cruceru2020computationally}, we construct embeddings by minimizing a suitable loss function. 
Thanks to \cref{eq:model_distance} and \cref{eq:model_curvature} we can minimize any distance and curvature depending loss via gradient descent.  
Let $V = \{x_{1},\ldots,x_{n}\}$ and denote the embedded nodes by 
$\{y_{i}=(z_i, r_i)\}_{i=1}^n\subset M_{h}\heter$. 
%
Using the shorthand $(y_{1},\ldots, y_{n}) = \{y_{i}\}$, we consider a loss function 
of the form
\begin{equation}\label{eq:loss_defn}
\mathcal{L}(\{y_{i}\}) = \mathcal{L}_{\text{d}}(\{y_{i}\}) + \tau\mathcal{L}_{\text{c}}(\{y_{i}\}),
\end{equation}
\noindent 
where $\tau$ is a scale parameter acting as a regularizer. We take $\mathcal{L}_{d}$ to be the average relative squared distance distortion (also known as `dilation')
\begin{equation}\label{eq:loss_distance} 
\mathcal{L}_{d}(\{y_{i}\}) = \sum_{i,j}\left \vert \frac{d_{g_{h}}^{2}(z_{i},z_{j}) + (r_{i} - r_{j})^{2}}{d_{G}^{2}(x_{i},x_{j})} - 1 \right\vert    
\end{equation}
\noindent where we have used \cref{eq:model_distance} to compute the squared distance of embedded nodes. This has the advantage of depending only on the squared distance functions which we recall to be locally smooth. On the other hand, $\mathcal{L}_{c}$ is a new curvature-based loss
\begin{equation}
\mathcal{L}_{c}(\{y_{i}\}) = \sum_{i}\frac{\left(\mathrm{F}(x_{i}) - \mathrm{R}_{h} - \mathrm{R}_{\alpha}(r_{i})\right)^{2}}{\left(\lvert \mathrm{F}(x_{i})\rvert + \epsilon\right)^{2}}, \label{eq:loss_curvature}
\end{equation}
\noindent where $\epsilon$ is a constant to avoid numerical instabilities. Note that this is just one option to encourage curvature matching and other losses can be adopted. 

By minimizing $\mathcal{L}_{d}$ we account for long-range interactions in the form of pairwise distances meaning that we prioritize minimization of the average distance distortion $\davg$. However, alternative distance-based losses have also been explored and some of these are more tailored to better recovering the neighbours (as measured by the $\map$) as discussed in Section 4 of \citet{cruceru2020computationally}. On the other hand, $\mathcal{L}_{c}$ is a curvature-based distortion so it measures \emph{how close the local geometry of the manifold around the embedded nodes resembles that of the graph} $G$. 

\subsection{Algorithm} Given our loss in \cref{eq:loss_defn}, we apply Riemannian stochastic gradient descent (R-SGD) \cite{bonnabel2013stochastic} to find an optimal embedding  of a given graph $G$ into our heterogeneous manifold. Assume that we have chosen a homogeneous factor $M_{h}$ and we have mapped the nodes to the points $(z_{i},r_{i})\in M_{h}\heter$. We denote by $\mathsf{U}$ the update on the homogeneous coordinates $\{z_{i}^{(t)}\}$ based on a step of R-SGD of the loss in \cref{eq:loss_defn}. We show the following:
\begin{proposition}\label{lemma:update} If we apply R-SGD to $\mathcal{L} = \mathcal{L}_{d} + \tau\mathcal{L}_{c}$, the update of the radial component simply becomes: 
\begin{equation}\label{updaterule_rot_sym}
(r_{i}^{(t)} + \partial_{r_{i}}\mathcal{L})_{+}  \leftarrow r_{i}^{(t)},
\end{equation}
\noindent with $(\cdot)_{+}$ the positive part. Therefore, the update on the product space is
\begin{equation}\label{eq:update_product_space}
(\mathsf{U}(z_{i}^{(t)}), (r_{i}^{(t)} + \partial_{r_{i}}\mathcal{L})_{+})) \leftarrow (z_{i}^{(t)},r_{i}^{(t)}).
\end{equation}
\end{proposition}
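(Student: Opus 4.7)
The plan is to exploit both the product structure of $M = M_{h}\times \mathbb{R}^{3}$ and the fact that the embedding keeps the angular coordinates $\bar\theta$ fixed, so the update on the rotationally symmetric factor is effectively one-dimensional.

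First, I would invoke the product decomposition of the exponential map, equation \eqref{exponentialformula}. Because the metric on $M$ splits as $g_{h}\oplus g_{\varphi}$, the Riemannian gradient of $\mathcal{L}$ at $(z_{i},r_{i},\bar\theta)$ splits accordingly, with no mixing between the two factors. A single step of R-SGD therefore factorizes into (i) the standard R-SGD step on $M_{h}$, which by definition is $\mathsf{U}(z_{i}^{(t)})$, and (ii) an independent step on the rotationally symmetric factor. This immediately gives the first slot of \eqref{eq:update_product_space}, and it remains to show the radial update reduces to \eqref{updaterule_rot_sym}.

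Next, I would analyze the radial factor. Since the embedding keeps the angles fixed at $\bar\theta$, and $\mathcal{L}$ depends on the second factor only through $r_{i}$ (both the distance term via $(r_{i}-r_{j})^{2}$ and the curvature term via $\mathrm{R}_{\alpha}(r_{i})$), the relevant quantity is the $\partial_{r}$-component of the Riemannian gradient. Writing $g_{\varphi} = dr^{2} + \varphi_{\alpha}^{2}(r)g_{\mathbb{S}^{2}}$, the coordinate field $\partial_{r}$ is $g_{\varphi}$-orthogonal to the spherical directions and has unit norm. Consequently the $\partial_{r}$-component of the Riemannian gradient of a smooth function equals its coordinate partial derivative, i.e. $\partial_{r_{i}}\mathcal{L}$. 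Moreover, radial rays with fixed angular coordinates are unit-speed geodesics of $g_{\varphi}$ (this is exactly what \eqref{radialdistance} records), so the exponential map along the radial direction is simply $\exp_{(r_{i},\bar\theta)}(v\,\partial_{r}) = (r_{i}+v,\bar\theta)$. Putting these two observations together, one R-SGD step in the radial direction produces the affine update $r_{i}^{(t)}+\partial_{r_{i}}\mathcal{L}$ (with learning-rate absorbed into the derivative, following the convention of the surrounding text).

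The final ingredient is the positive-part projection. Polar coordinates on $\mathbb{R}^{3}$ require $r \ge 0$, and one needs to specify what happens if the unconstrained radial step would push $r_{i}$ below zero. Because the ray through the origin is a geodesic and the boundary $\{r = 0\}$ is the origin itself, the natural Riemannian analogue of projected gradient descent replaces any negative candidate by $0$, giving the $(\,\cdot\,)_{+}$ in \eqref{updaterule_rot_sym}. Combining this with the homogeneous-factor update yields \eqref{eq:update_product_space}.

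The main obstacle, though small, is the justification that the radial direction is a unit-speed geodesic whose Riemannian gradient coefficient coincides with the Euclidean partial derivative; this is where one has to use the warped-product form of $g_{\varphi}$ to rule out any Christoffel-symbol contributions. Everything else follows from the factorization formulas \eqref{distanceformula}--\eqref{scalarcurvatureformula} already recorded in the paper.
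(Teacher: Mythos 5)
Your proof follows essentially the same route as the paper's: split the Riemannian gradient and exponential map across the product factors, use the unit norm of $\partial_{r}$ in $g_{\varphi}=dr^{2}+\varphi^{2}(r)g_{\mathbb{S}^{2}}$ to identify the radial gradient component with the coordinate partial derivative, and use that radial rays are unit-speed geodesics to reduce the exponential map to addition in $r$. The only cosmetic difference is that the paper folds the positive part directly into its formula for the radial exponential map, whereas you present it as a projection enforcing $r\ge 0$; both yield the same update.
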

\noindent We provide the proof in the \cref{sec_append_algorithm}. We see that compared to the baseline homogeneous models, the only additional term we need to compute for the gradient of $\mathcal{L}$ is a radial derivative.

\remark{} Our construction generalizes to \emph{weighted} products of homogeneous factors and rotationally symmetric spaces. Specifically, this means considering $M_h \times \lambda^{2} \mathcal{R}$ with $\lambda >0$.  
The scaling $\lambda$ impacts \cref{eq:model_distance} and \cref{eq:model_curvature} and results in weighting less the radial contribution to the distance function effectively allowing us to match the node-wise graph curvature with the pointwise continuous one more easily. Since scaling the curvature generally affects the range of curvatures, we can also allow the curvature matching to be up to a linear transformation. This is discussed further in \cref{subsection_append_linearmaps}. 
\begin{figure}[t!]
\centering
\vspace{-20mm}
\includegraphics[width = 0.96\linewidth]{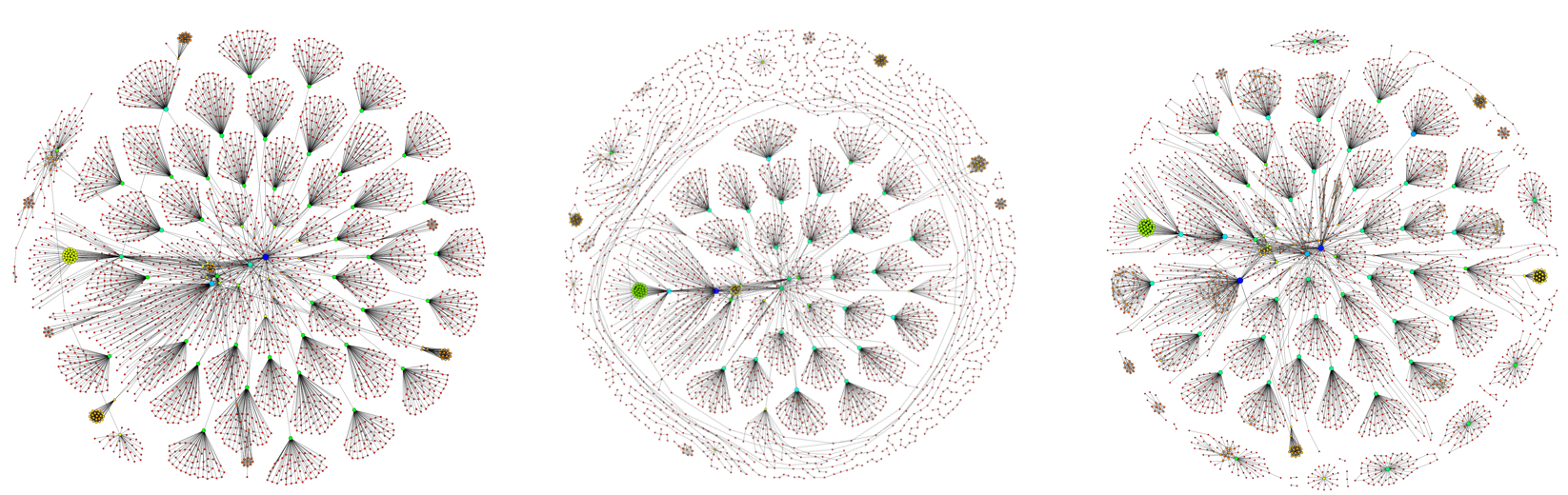}
\caption{Web-edu graph (left) reconstructed using only distance in $\mathbb{H}^{5}\times\mathbb{H}^{5}$ (center) and with a curvature-based correction (with $22.6\%$ reconstruction improvement) in $\mathbb{H}^{5}\times\mathbb{H}^{5}\heter$ (right).}
\label{fig:web_edu_recon}
\end{figure}
 
\section{Experiments}

We now experimentally evaluate graph embeddings into the new class of heterogeneous manifolds introduced in \cref{sec:het_manifolds}. We test the performance on reconstruction of real datasets and showcase its potential for better higher-order preservation and heterogeneous random graph generation. 

\subsection{A synthetic experiment}
Curvature is deeply related to the rate of expansion of space on manifolds, meaning that it affects whether volume of geodesic balls grow polynomially or exponentially \cite{bishop1964geometry}. 
While on homogeneous manifolds the volume of a geodesic ball of given radius is independent of the 
position of its center, on heterogeneous manifolds the volume is \textit{position-dependent}. 
 To highlight this aspect in our setting, we consider a heterogeneous graph composed of a cycle and a tree (\cref{fig:volume_growth}). 
We then embed this graph in $\mathbb{H}^{3}\heter$ and use the normalized volume of the ambient manifold to match the volume on the graph for a given radius $\rho$ (details on the formulas are in \cref{sec_append_rot_sym}). Given a node $x_{i}\in V$ we compare the value of $\lvert \{x_{j} \in V: d_{G}(x_{i},x_{j}) \leq\rho\}\rvert$ with the volume of the annular region of radius $\rho$ centred at the point $f(x_{i})\in \mathbb{H}^{3}\heter$. This shows in a controlled environment how the curvature preservation also allows to recover information related to expansion properties in the graph \emph{directly from the ambient space} \cref{fig:volume_growth} (see \cref{sec_details_exp} for further details).
\begin{figure}[b!]
\begin{overpic}[height = 4cm]{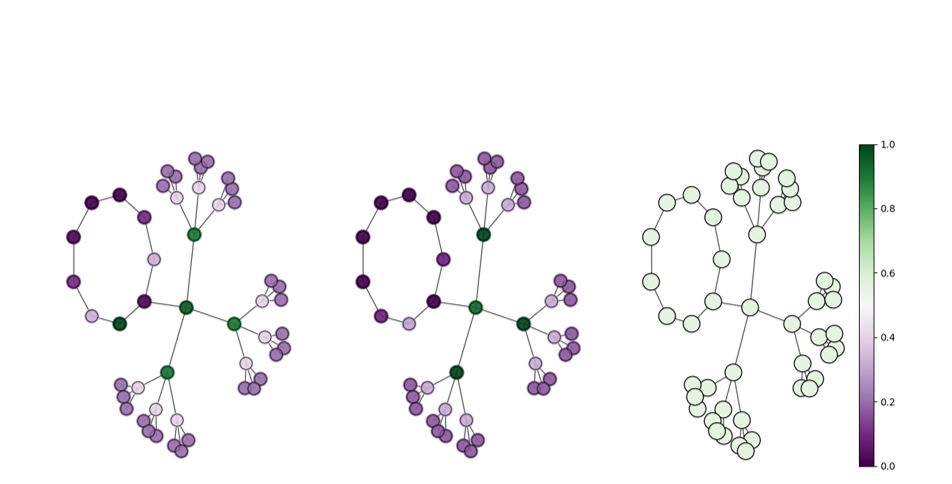}
        \put(10,94){{\scriptsize{Volume 4 ball}}}
        \put(16,84){{\scriptsize{on graph}}}
        \put(70,94){{\scriptsize{Heterog. volume}}}
        \put(74,84){{\scriptsize{reconstruction}}}
        \put(139,94){{\scriptsize{Homog. volume}}}
        \put(143,84){{\scriptsize{reconstruction}}}
      \end{overpic}
    \centering
    \caption{Volume matching.}
    \label{fig:volume_growth}
    \vspace{-5mm}
\end{figure}




\begin{table*}
 \centering
\resizebox{0.8\linewidth}{!}{
\begin{tabular}{ |p{1.75cm}||p{0.8cm}p{0.6cm}p{0.8cm}|p{0.80cm}p{0.6cm}p{0.80cm}|p{0.80cm}p{0.6cm}p{0.80cm}|p{0.80cm}p{0.6cm}p{0.80cm}|  }
 \hline
\multicolumn{13}{|c|}{\textbf{Distance and Curvature Reconstruction Error}} \Tstrut\Bstrut\\
\hline
 \multicolumn{1}{|c||}{ } & \multicolumn{3}{|c|}{\textbf{Aves-Wildbird} } &  \multicolumn{3}{|c|}{\textbf{CS-PhD}} & \multicolumn{3}{|c|}{\textbf{WebEdu}} & 
 \multicolumn{3}{|c|}{\textbf{Facebook}} \Tstrut\Bstrut\\
  \multicolumn{1}{|c||}{ $|V|$ / $|E|$ } & \multicolumn{3}{|c|}{ 131 / 1444  } &  \multicolumn{3}{|c|}{ 1025 / 1043 } & \multicolumn{3}{|c|}{ 3031 / 6474 } & 
 \multicolumn{3}{|c|}{ 4039 / 88324  } \Tstrut\Bstrut\\
  \hline
& $\davg$ & $\map$ & $\cavg$ & $\davg$ & $\map$ & $\cavg$ &$\davg$ & $\map$ & $\cavg$ &$\davg$ & $\map$ & $\cavg$\Tstrut\\[6pt]
$\mathbb{H}^5 \times \mathbb{H}^5 $  &  .088 & .99 &   (131) &  .038    & .96 &  (76.0) & .036    &.98 &  (220) & .043&  .77 & ({\tiny{$>$}}$10^4$)\\[2pt]
\rowcolor{bubbles}
$\mathbb{H}^5 \!\times \mathbb{H}^5 \!\heter\! $   & .085    &.99 &    .001 & .038    &.99 &  .089 & .036    &.97 & .018 & .057    &.76 & .161\\[2pt]

$\mathbb{H}^5 \!\times\mathbb{S}^5 $   & .090    &.98 &   (131) & .050    &.91 &  (76.0) & .050    & .98 & (220) &.059    &.74 & ({\tiny{$>$}}$10^4$)\\[2pt]
\rowcolor{bubbles}
$\mathbb{H}^5 \!\times \mathbb{S}^5 \!\heter\! $   & .088    &.98 &   .004 & .043    &.92 &  .186 & .050    &.99 & .021 & .076    & .74 & .163\\
 \hline
\end{tabular}
}
\caption{\label{table_reconstruction} Perfomance of reconstruction tasks on real datasets. For  homogeneous embeddings, $\cavg$ reports the variance of Forman curvature.} 
\end{table*}

\subsection{Reconstruction tasks} We use our embeddings for the reconstruction of four real-world graphs: 
aves-wildbird \cite{nr} (small animal network)
 CS-PhD \cite{de2018exploratory} (advisor-advisee relationship),
 web-edu \cite{gleich2004fast} (web networks from the .edu domain) and
 Facebook \cite{facebook_dataset} (dense social network). 

We show that our heterogeneous embeddings perform well w.r.t. distance-based metrics (average distance distortion $\davg$ and mean average precision $\map$) while also matching the node-wise curvature information with the pointwise scalar curvature on the manifold. To assess the quality of 
the latter, 
we introduce the \emph{average curvature distortion} 
\[
\cavg := \frac{1}{n}\sum_{x_{i} = 1}^{n}\frac{\lvert \mathrm{F}(x_{i}) - \mathrm{R}(f(x_{i}))\rvert}{\lvert \mathrm{F}(x_{i})\rvert + 1}.
\]
\noindent We stress that the \emph{variance} of Forman is generally high due to its dependence on the size of the degrees \cref{table_reconstruction}. In fact, we have also confirmed experimentally that if we normalize Forman Ricci along each edge using the largest degree of the end-nodes, then $\cavg$ is below $10^{-3}$ on each dataset.
As baseline homogeneous embeddings, we use different 
products of space-forms $M_{h}$ from \citet{gu2018learning} and compare them to the heterogeneous embeddings constructed with the rotationally symmetric factor $M_{h}\heter$. 
%
The results in Table \ref{table_reconstruction} show that the proposed model attains reconstruction fidelity (in the sense of distance distortions) on par with the homogeneous baseline while also minimizing $\cavg$. In the homogeneous setting one can only match an \emph{average} `global curvature' as heuristically investigated in \citet{gu2018learning} since the curvature is position-independent. Accordingly, computing $\cavg$ is meaningless and we then simply report the variance of the Forman curvature as a measure of the information lost when moving from the graph curvature to the smooth manifold one.  

\vspace{3mm}

\paragraph{Estimation of triangle counts.}
Traditional graph embedding tend to distort higher-order structures such as cycles and triangles \cite{VERBEEK20161}. We verify if we can use the curvature in our heterogeneous embeddings to improve triangle counting. 
Given the estimated number of triangles $\sharp_{\Delta}(i)'$ at node $i$, we introduce an \emph{average triangle distortion} $\tavg$ similarly to $\cavg$ by replacing $\mathrm{F}(x_{i})$ with the actual number of triangles $\sharp_{\Delta}(i)$. We consider the graph nodes' embeddings in $M_{h}\heter$ 
and estimate the number of triangles in two different ways:  based on the \emph{nearest-neighbour graph} and  
exploiting the curvature information. In the latter case, we use \cref{eq:Forman_scal} with the curvature of the manifold at respective node embedding in the place of $\mathrm{F}$  
to estimate the number of triangles. 
We report the percentage improvement gained relying on curvature in \cref{tab:improve} and refer to \cref{sec_details_exp} for more details.



Figure \ref{fig:web_edu_recon} shows another example of WebEdu graph reconstruction with and without the use of curvature (see \cref{sec_details_exp} for further details). %
%
It highlights the advantages of heterogeneous embeddings for graph reconstruction tasks by allowing to account for the curvature. 
\vspace{0.3cm}

\begin{table}[t]
\centering
\resizebox{0.55\linewidth}{!}{
\begin{tabular}{ |p{1.55cm}||p{0.68cm}p{0.6cm}p{0.65cm}|p{0.68cm}p{0.6cm}p{0.65cm}|p{0.68cm}p{0.6cm}p{0.65cm}|p{0.68cm}p{0.56cm}p{0.72cm}|  }
 \hline
 \multicolumn{1}{|c||}{ } & \multicolumn{3}{|c|}{\textbf{Aves-Wildbird} } &  \multicolumn{3}{|c|}{\textbf{CS-PhD}} & \multicolumn{3}{|c|}{\textbf{WebEdu}} & 
 \multicolumn{3}{|c|}{\textbf{Facebook}} \Tstrut\Bstrut\\
  \multicolumn{1}{|c||}{  \textbf{Improvement}} & \multicolumn{3}{|c|}{ 49.1 $\%$} &  \multicolumn{3}{|c|}{ 0 $\%$ } & \multicolumn{3}{|c|}{ 44.2$\%$ } & 
 \multicolumn{3}{|c|}{ 20.1$\%$  } \Tstrut\Bstrut\\
  \hline
\end{tabular}
}
\caption{\label{tab:improve} Improvement of triangle counts  using curvature.  Note that the CS-PhD graph only contains 4 triangles.}

\end{table}

\begin{figure}[H]
\centering
 \includegraphics[height = 4cm]{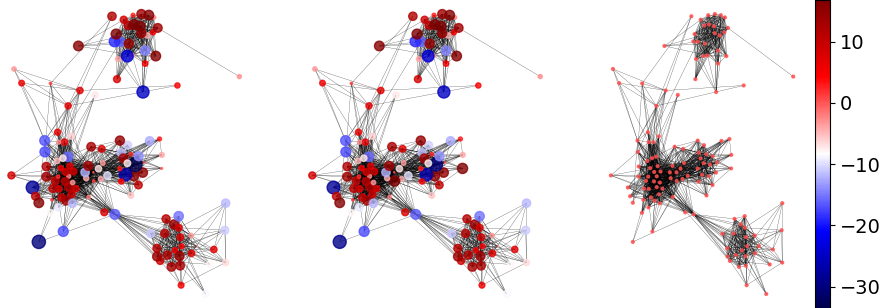}
 \captionof{figure}{Curvature matching on the Aves-Wildbird dataset: true scalar Forman (left), curvature reconstructed from heterogeneous manifold (centre) and from homogeneous manifold (right). 
 }
\end{figure}

\subsection{Generating graphs from heterogeneous manifolds}
Inspired by \citet{cruceru2020computationally}, we study the advantage of heterogeneous manifolds from the perspective of random graphs. The focus is on how one can use the curvature information to generate graphs that are highly \textit{heterogeneous} and exhibit localized dense community structure while preserving scale-free properties common to \emph{small world networks} \cite{watts1998collective, newman2000models, newman2003structure}. 
We consider the following setting: we generate graphs of 500 nodes from $\mathbb{H}^{3}$ and $\mathbb{H}^{3}\heter$ using uniform sampling on the tangent space. We then test two approaches to promote formation of community structures, measured by the size of maximal cliques. On $\mathbb{H}^{3}$, since the only geometric quantity we can vary is the distance, we sample nearest-neighbour graphs with increasing distance threshold. On $\mathbb{H}^{3}\heter$, we instead combine distance and curvature: we increase the distance threshold only for nodes sampled from more positively-curved regions (see \cref{sec_details_exp}). 
\cref{fig:randomgraph_deg_curv} (left-to-right) depicts the degree distributions (averaged over 100 runs using Wasserstein barycenters \cite{agueh2011barycenters}) for three cases: sampling from $\mathbb{H}^{3}$ with unit distance threshold (no dense community structure), with larger distance threshold to encourage clique formation, and relying on curvature in $\mathbb{H}^{3}\heter$ to attain similar clique sizes. We see how the curvature gives rise to heterogeneous density on the graph hence achieving dense community structure while preserving the scale-free profile. On the other hand, sampling from a homogeneous manifold with different thresholds cannot attain dense cliques without losing the power-law degree distribution due to a homogeneous increase in density.

\begin{figure}[H]
\centering
\vspace{-20mm}
\hspace{-6mm}
\begin{overpic}[scale=1]{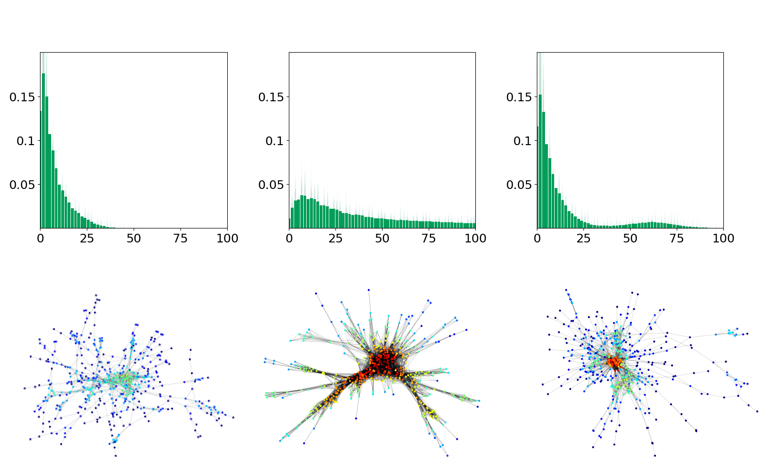}
\put(54,102){{\scriptsize{Degree}}}
\put(178,102){{\scriptsize{Degree}}}
\put(302,102){{\scriptsize{Degree}}}
\put(20,212){{\scriptsize{Homog. w/o community}}}
\put(146,212){{\scriptsize{Homog. w/ community}}}
\put(272,212){{\scriptsize{Heterog. w/ community}}}
\end{overpic}
\caption{Degree distribution and generated examples of graphs sampled from hyperbolic space (left), after the creation of community using distance (centre) and curvature (right). Nodes are colored by the number of triangles (blue corresponding to lower values, red to bigger values).}
\label{fig:randomgraph_deg_curv}
\end{figure}

\section{Conclusions}
In this paper we proposed curvature-aware graph embeddings using a novel class of heterogeneous manifolds constructed  as a product of a homogeneous space and a rotationally-symmetric manifold and offering 
a rich heterogeneous geometric structure together with computational tractability. Our approach extends all existing models for graph embeddings to faithfully approximate both distance-based and curvature-based information of the graph. We showed the effectiveness of the proposed approach on real-world graph reconstruction tasks and point towards the advantage of retaining curvature information in higher order structure detection and in controlling local behaviors when sampling random graphs. 

\paragraph{Limitations and future directions.}
We have restricted our discussion to embedding of undirected graphs. Embeddings of directed graphs into  (pseudo)-Riemannian manifolds have recently been studied by \citet{sim2021directed}. In future work, we will study extensions of the proposed framework to such settings. 
Second, we used a fixed rotationally symmetric function $\varphi$ which determines the curvature profile of the ambient space. It is possible to make it \emph{learnable} in future extensions. Third, exploiting Ricci curvature information along walks, more sophisticated anisotropic curvature matching models could be investigated to go beyond the scalar distribution at nodes. Finally, the family of manifolds we consider is a subclass of a more general ensemble of {\em warped products} that will be investigated in the future. In particular, one can also explore using multiple rotationally symmetric copies to better account for tensorial curvature information. 

\paragraph{Acknowledgements} This research was supported in part by ERC Consolidator grant No. 724228 (LEMAN).



\bibliography{main}

\begin{thebibliography}{62}
\providecommand{\natexlab}[1]{#1}
\providecommand{\url}[1]{\texttt{#1}}
\expandafter\ifx\csname urlstyle\endcsname\relax
  \providecommand{\doi}[1]{doi: #1}\else
  \providecommand{\doi}{doi: \begingroup \urlstyle{rm}\Url}\fi

\bibitem[Agueh and Carlier(2011)]{agueh2011barycenters}
Martial Agueh and Guillaume Carlier.
\newblock Barycenters in the {W}asserstein space.
\newblock \emph{SIAM Journal on Mathematical Analysis}, 43\penalty0
  (2):\penalty0 904--924, 2011.

\bibitem[Angenent and Knopf(2004)]{angenent2004example}
Sigurd Angenent and Dan Knopf.
\newblock An example of neckpinching for {R}icci flow on ${S}^{n+ 1}$.
\newblock \emph{Mathematical Research Letters}, 11\penalty0 (4):\penalty0
  493--518, 2004.

\bibitem[Bachmann et~al.(2020)Bachmann, B{\'e}cigneul, and
  Ganea]{bachmann2020constant}
Gregor Bachmann, Gary B{\'e}cigneul, and Octavian Ganea.
\newblock Constant curvature graph convolutional networks.
\newblock In \emph{International Conference on Machine Learning}, pages
  486--496. PMLR, 2020.

\bibitem[Battiston et~al.(2020)Battiston, Cencetti, Iacopini, Latora, Lucas,
  Patania, Young, and Petri]{battiston2020networks}
Federico Battiston, Giulia Cencetti, Iacopo Iacopini, Vito Latora, Maxime
  Lucas, Alice Patania, Jean-Gabriel Young, and Giovanni Petri.
\newblock Networks beyond pairwise interactions: structure and dynamics.
\newblock \emph{Physics Reports}, 874:\penalty0 1--92, 2020.

\bibitem[Becigneul and Ganea(2018)]{becigneul2018riemannian}
Gary Becigneul and Octavian-Eugen Ganea.
\newblock Riemannian adaptive optimization methods.
\newblock In \emph{International Conference on Learning Representations}, 2018.

\bibitem[Belkin and Niyogi(2001)]{belkin2001laplacian}
Mikhail Belkin and Partha Niyogi.
\newblock Laplacian eigenmaps and spectral techniques for embedding and
  clustering.
\newblock In \emph{Nips}, volume~14, pages 585--591, 2001.

\bibitem[Bengio et~al.(2013)Bengio, Courville, and
  Vincent]{bengio2013representation}
Yoshua Bengio, Aaron Courville, and Pascal Vincent.
\newblock Representation learning: A review and new perspectives.
\newblock \emph{IEEE transactions on pattern analysis and machine
  intelligence}, 35\penalty0 (8):\penalty0 1798--1828, 2013.

\bibitem[Benson et~al.(2016)Benson, Gleich, and Leskovec]{benson2016higher}
Austin~R Benson, David~F Gleich, and Jure Leskovec.
\newblock Higher-order organization of complex networks.
\newblock \emph{Science}, 353\penalty0 (6295):\penalty0 163--166, 2016.

\bibitem[Bishop et~al.(1964)Bishop, Grittenden, et~al.]{bishop1964geometry}
Richard Bishop, Richard~J Grittenden, et~al.
\newblock Geometry of manifolds.
\newblock 1964.

\bibitem[Bonnabel(2013)]{bonnabel2013stochastic}
Silvere Bonnabel.
\newblock Stochastic gradient descent on riemannian manifolds.
\newblock \emph{IEEE Transactions on Automatic Control}, 58\penalty0
  (9):\penalty0 2217--2229, 2013.

\bibitem[Bronstein et~al.(2006)Bronstein, Bronstein, and
  Kimmel]{bronstein2006generalized}
Alexander~M Bronstein, Michael~M Bronstein, and Ron Kimmel.
\newblock Generalized multidimensional scaling: a framework for
  isometry-invariant partial surface matching.
\newblock \emph{Proceedings of the National Academy of Sciences}, 103\penalty0
  (5):\penalty0 1168--1172, 2006.

\bibitem[Bronstein et~al.(2017)Bronstein, Bruna, LeCun, Szlam, and
  Vandergheynst]{bronstein2017geometric}
Michael~M Bronstein, Joan Bruna, Yann LeCun, Arthur Szlam, and Pierre
  Vandergheynst.
\newblock Geometric deep learning: going beyond euclidean data.
\newblock \emph{IEEE Signal Processing Magazine}, 34\penalty0 (4):\penalty0
  18--42, 2017.
\newblock URL \url{https://arxiv.org/abs/1611.08097}.

\bibitem[Bronstein et~al.(2021)Bronstein, Bruna, Cohen, and
  Veli{\v{c}}kovi{\'c}]{bronstein2021geometric}
Michael~M Bronstein, Joan Bruna, Taco Cohen, and Petar Veli{\v{c}}kovi{\'c}.
\newblock Geometric deep learning: Grids, groups, graphs, geodesics, and
  gauges.
\newblock \emph{arXiv preprint arXiv:2104.13478}, 2021.

\bibitem[Bruna et~al.(2014)Bruna, Zaremba, Szlam, and LeCun]{bruna2013spectral}
Joan Bruna, Wojciech Zaremba, Arthur Szlam, and Yann LeCun.
\newblock Spectral networks and locally connected networks on graphs.
\newblock In \emph{2nd International Conference on Learning Representations,
  {ICLR} 2014}, 2014.

\bibitem[Chamberlain et~al.(2021)Chamberlain, Rowbottom, Eynard, Di~Giovanni,
  Dong, and Bronstein]{blend}
Benjamin Chamberlain, James Rowbottom, Davide Eynard, Francesco Di~Giovanni,
  Xiaowen Dong, and Michael Bronstein.
\newblock Beltrami flow and neural diffusion on graphs.
\newblock In \emph{NeurIPS}, 2021.

\bibitem[Chamberlain et~al.(2017)Chamberlain, Clough, and
  Deisenroth]{chamberlain2017neural}
Benjamin~Paul Chamberlain, James Clough, and Marc~Peter Deisenroth.
\newblock Neural embeddings of graphs in hyperbolic space, 2017.

\bibitem[Chami et~al.(2019)Chami, Ying, R{\'e}, and
  Leskovec]{chami2019hyperbolic}
Ines Chami, Zhitao Ying, Christopher R{\'e}, and Jure Leskovec.
\newblock Hyperbolic graph convolutional neural networks.
\newblock \emph{Advances in neural information processing systems},
  32:\penalty0 4868--4879, 2019.

\bibitem[Cruceru et~al.(2021)Cruceru, B{\'e}cigneul, and
  Ganea]{cruceru2020computationally}
Calin Cruceru, Gary B{\'e}cigneul, and Octavian-Eugen Ganea.
\newblock Computationally tractable riemannian manifolds for graph embeddings.
\newblock In \emph{AAAI}, 2021.

\bibitem[Cuturi and Doucet(2014)]{cuturi2014fast}
Marco Cuturi and Arnaud Doucet.
\newblock Fast computation of wasserstein barycenters.
\newblock In \emph{International conference on machine learning}, pages
  685--693. PMLR, 2014.

\bibitem[De~Nooy et~al.(2018)De~Nooy, Mrvar, and Batagelj]{de2018exploratory}
Wouter De~Nooy, Andrej Mrvar, and Vladimir Batagelj.
\newblock \emph{Exploratory social network analysis with Pajek: Revised and
  expanded edition for updated software}, volume~46.
\newblock Cambridge university press, 2018.

\bibitem[Di~Giovanni(2021)]{di2021rotationally}
Francesco Di~Giovanni.
\newblock Rotationally symmetric {R}icci flow on $\mathbb{R}^{n+ 1}$.
\newblock \emph{Advances in Mathematics}, 381:\penalty0 107621, 2021.

\bibitem[Forman(2003)]{forman2003discrete}
Robin Forman.
\newblock Discrete and computational geometry, 2003.

\bibitem[Gleich et~al.(2004)Gleich, Zhukov, and Berkhin]{gleich2004fast}
David Gleich, Leonid Zhukov, and Pavel Berkhin.
\newblock Fast parallel pagerank: A linear system approach.
\newblock \emph{Yahoo! Research Technical Report YRL-2004-038, available via
  http://research. yahoo. com/publication/YRL-2004-038. pdf}, 13:\penalty0 22,
  2004.

\bibitem[Goller and Kuchler(1996)]{goller1996learning}
Christoph Goller and Andreas Kuchler.
\newblock Learning task-dependent distributed representations by
  backpropagation through structure.
\newblock In \emph{ICNN}, 1996.

\bibitem[Gromov(1981)]{gromov1981structures}
Mikhael Gromov.
\newblock Structures m{\'e}triques pour les vari{\'e}t{\'e}s riemanniennes.
\newblock \emph{Textes Math{\'e}matiques [Mathematical Texts]}, 1, 1981.

\bibitem[Gu et~al.(2018)Gu, Sala, Gunel, and R{\'e}]{gu2018learning}
Albert Gu, Frederic Sala, Beliz Gunel, and Christopher R{\'e}.
\newblock Learning mixed-curvature representations in product spaces.
\newblock In \emph{International Conference on Learning Representations}, 2018.

\bibitem[Gulcehre et~al.(2018)Gulcehre, Denil, Malinowski, Razavi, Pascanu,
  Hermann, Battaglia, Bapst, Raposo, Santoro, et~al.]{gulcehre2018hyperbolic}
Caglar Gulcehre, Misha Denil, Mateusz Malinowski, Ali Razavi, Razvan Pascanu,
  Karl~Moritz Hermann, Peter Battaglia, Victor Bapst, David Raposo, Adam
  Santoro, et~al.
\newblock Hyperbolic attention networks.
\newblock In \emph{International Conference on Learning Representations}, 2018.

\bibitem[Huang et~al.(2018)Huang, Wu, and Van~Gool]{huang2018building}
Zhiwu Huang, Jiqing Wu, and Luc Van~Gool.
\newblock Building deep networks on grassmann manifolds.
\newblock In \emph{Proceedings of the AAAI Conference on Artificial
  Intelligence}, volume~32, 2018.

\bibitem[Indyk et~al.(2017)Indyk, Matou{\v{s}}ek, and Sidiropoulos]{indyk20178}
Piotr Indyk, Ji{\v{r}}{\'\i} Matou{\v{s}}ek, and Anastasios Sidiropoulos.
\newblock 8: low-distortion embeddings of finite metric spaces.
\newblock In \emph{Handbook of discrete and computational geometry}, pages
  211--231. Chapman and Hall/CRC, 2017.

\bibitem[Johnson and Lindenstrauss(1984)]{johnson1984extensions}
William~B Johnson and Joram Lindenstrauss.
\newblock Extensions of lipschitz mappings into a hilbert space 26.
\newblock \emph{Contemporary Mathematics}, 26, 1984.

\bibitem[Krioukov et~al.(2010)Krioukov, Papadopoulos, Kitsak, Vahdat, and
  Bogun{\'a}]{krioukov2010hyperbolic}
Dmitri Krioukov, Fragkiskos Papadopoulos, Maksim Kitsak, Amin Vahdat, and
  Mari{\'a}n Bogun{\'a}.
\newblock Hyperbolic geometry of complex networks.
\newblock \emph{Physical Review E}, 82\penalty0 (3):\penalty0 036106, 2010.

\bibitem[Leskovec and Mcauley(2012)]{facebook_dataset}
Jure Leskovec and Julian Mcauley.
\newblock Learning to discover social circles in ego networks.
\newblock In F.~Pereira, C.~J.~C. Burges, L.~Bottou, and K.~Q. Weinberger,
  editors, \emph{Advances in Neural Information Processing Systems}, volume~25.
  Curran Associates, Inc., 2012.
\newblock URL
  \url{https://proceedings.neurips.cc/paper/2012/file/7a614fd06c325499f1680b9896beedeb-Paper.pdf}.

\bibitem[Lin(2019)]{lin2019riemannian}
Zhenhua Lin.
\newblock Riemannian geometry of symmetric positive definite matrices via
  cholesky decomposition.
\newblock \emph{SIAM Journal on Matrix Analysis and Applications}, 40\penalty0
  (4):\penalty0 1353--1370, 2019.

\bibitem[Linial et~al.(1995)Linial, London, and Rabinovich]{linial1995geometry}
Nathan Linial, Eran London, and Yuri Rabinovich.
\newblock The geometry of graphs and some of its algorithmic applications.
\newblock \emph{Combinatorica}, 15\penalty0 (2):\penalty0 215--245, 1995.

\bibitem[Liu et~al.()Liu, Nickel, and Kiela]{liu2019hyperbolic}
Qi~Liu, Maximilian Nickel, and Douwe Kiela.
\newblock Hyperbolic graph neural networks.
\newblock In H.~Wallach, H.~Larochelle, A.~Beygelzimer, F.~d\textquotesingle
  Alch\'{e}-Buc, E.~Fox, and R.~Garnett, editors, \emph{Advances in Neural
  Information Processing Systems}. Curran Associates, Inc.

\bibitem[L{\'o}pez et~al.(2021{\natexlab{a}})L{\'o}pez, Pozzetti, Trettel,
  Strube, and Wienhard]{lopez2021symmetric}
Federico L{\'o}pez, Beatrice Pozzetti, Steve Trettel, Michael Strube, and Anna
  Wienhard.
\newblock Symmetric spaces for graph embeddings: A finsler-riemannian approach.
\newblock \emph{arXiv preprint arXiv:2106.04941}, 2021{\natexlab{a}}.

\bibitem[L{\'o}pez et~al.(2021{\natexlab{b}})L{\'o}pez, Pozzetti, Trettel,
  Strube, and Wienhard]{lopez2021vector}
Federico L{\'o}pez, Beatrice Pozzetti, Steve Trettel, Michael Strube, and Anna
  Wienhard.
\newblock Vector-valued distance and gyrocalculus on the space of symmetric
  positive definite matrices.
\newblock In \emph{Thirty-Fifth Conference on Neural Information Processing
  Systems}, 2021{\natexlab{b}}.

\bibitem[Luise et~al.(2019)Luise, Salzo, Pontil, and
  Ciliberto]{luise2019sinkhorn}
Giulia Luise, Saverio Salzo, Massimiliano Pontil, and Carlo Ciliberto.
\newblock Sinkhorn barycenters with free support via frank-wolfe algorithm.
\newblock \emph{Advances in Neural Information Processing Systems},
  32:\penalty0 9322--9333, 2019.

\bibitem[M{\'e}moli and Sapiro(2005)]{memoli2005theoretical}
Facundo M{\'e}moli and Guillermo Sapiro.
\newblock A theoretical and computational framework for isometry invariant
  recognition of point cloud data.
\newblock \emph{Foundations of Computational Mathematics}, 5\penalty0
  (3):\penalty0 313--347, 2005.

\bibitem[Newman(2000)]{newman2000models}
Mark~EJ Newman.
\newblock Models of the small world.
\newblock \emph{Journal of Statistical Physics}, 101\penalty0 (3):\penalty0
  819--841, 2000.

\bibitem[Newman(2003)]{newman2003structure}
Mark~EJ Newman.
\newblock The structure and function of complex networks.
\newblock \emph{SIAM review}, 45\penalty0 (2):\penalty0 167--256, 2003.

\bibitem[Ni et~al.(2019)Ni, Lin, Luo, and Gao]{ni2019community}
Chien-Chun Ni, Yu-Yao Lin, Feng Luo, and Jie Gao.
\newblock Community detection on networks with ricci flow.
\newblock \emph{Scientific reports}, 9\penalty0 (1):\penalty0 1--12, 2019.

\bibitem[Nickel and Kiela(2017)]{nickel2017poincare}
Maximillian Nickel and Douwe Kiela.
\newblock Poincar{\'e} embeddings for learning hierarchical representations.
\newblock \emph{Advances in neural information processing systems},
  30:\penalty0 6338--6347, 2017.

\bibitem[Ollivier(2007)]{ollivier2007ricci}
Yann Ollivier.
\newblock Ricci curvature of metric spaces.
\newblock \emph{Comptes Rendus Mathematique}, 345\penalty0 (11):\penalty0
  643--646, 2007.

\bibitem[Ollivier(2009)]{ollivier2009ricci}
Yann Ollivier.
\newblock Ricci curvature of markov chains on metric spaces.
\newblock \emph{Journal of Functional Analysis}, 256\penalty0 (3):\penalty0
  810--864, 2009.

\bibitem[Paeng(2012)]{paeng2012volume}
Seong-Hun Paeng.
\newblock Volume and diameter of a graph and ollivier’s ricci curvature.
\newblock \emph{European Journal of Combinatorics}, 33\penalty0 (8):\penalty0
  1808--1819, 2012.

\bibitem[Petersen(2006)]{petersen2006riemannian}
Peter Petersen.
\newblock \emph{Riemannian geometry}, volume 171.
\newblock Springer, 2006.

\bibitem[Rossi and Ahmed(2015)]{nr}
Ryan~A. Rossi and Nesreen~K. Ahmed.
\newblock The network data repository with interactive graph analytics and
  visualization.
\newblock In \emph{AAAI}, 2015.
\newblock URL \url{https://networkrepository.com}.

\bibitem[Roweis and Saul(2000)]{roweis2000nonlinear}
Sam~T Roweis and Lawrence~K Saul.
\newblock Nonlinear dimensionality reduction by locally linear embedding.
\newblock \emph{science}, 290\penalty0 (5500):\penalty0 2323--2326, 2000.

\bibitem[Sala et~al.(2018)Sala, De~Sa, Gu, and R{\'e}]{sala2018representation}
Frederic Sala, Chris De~Sa, Albert Gu, and Christopher R{\'e}.
\newblock Representation tradeoffs for hyperbolic embeddings.
\newblock In \emph{International conference on machine learning}, pages
  4460--4469. PMLR, 2018.

\bibitem[Samal et~al.(2018)Samal, Sreejith, Gu, Liu, Saucan, and
  Jost]{samal2018comparative}
Areejit Samal, RP~Sreejith, Jiao Gu, Shiping Liu, Emil Saucan, and J{\"u}rgen
  Jost.
\newblock Comparative analysis of two discretizations of ricci curvature for
  complex networks.
\newblock \emph{Scientific reports}, 8\penalty0 (1):\penalty0 1--16, 2018.

\bibitem[Sarkar(2011)]{sarkar2011low}
Rik Sarkar.
\newblock Low distortion delaunay embedding of trees in hyperbolic plane.
\newblock In \emph{International Symposium on Graph Drawing}, pages 355--366.
  Springer, 2011.

\bibitem[Scarselli et~al.(2008)Scarselli, Gori, Tsoi, Hagenbuchner, and
  Monfardini]{scarselli2008graph}
Franco Scarselli, Marco Gori, Ah~Chung Tsoi, Markus Hagenbuchner, and Gabriele
  Monfardini.
\newblock The graph neural network model.
\newblock \emph{IEEE transactions on neural networks}, 20\penalty0
  (1):\penalty0 61--80, 2008.

\bibitem[Sim et~al.(2021)Sim, Wiatrak, Brayne, Creed, and
  Paliwal]{sim2021directed}
Aaron Sim, Maciej Wiatrak, Angus Brayne, P{\'a}id{\'\i} Creed, and Saee
  Paliwal.
\newblock Directed graph embeddings in pseudo-riemannian manifolds.
\newblock \emph{arXiv preprint arXiv:2106.08678}, 2021.

\bibitem[Skopek et~al.(2019)Skopek, Ganea, and B{\'e}cigneul]{skopek2019mixed}
Ondrej Skopek, Octavian-Eugen Ganea, and Gary B{\'e}cigneul.
\newblock Mixed-curvature variational autoencoders.
\newblock In \emph{International Conference on Learning Representations}, 2019.

\bibitem[Sperduti(1994)]{sperduti1994encoding}
Alessandro Sperduti.
\newblock Encoding labeled graphs by labeling {RAAM}.
\newblock In \emph{NIPS}, 1994.

\bibitem[Tenenbaum et~al.(2000)Tenenbaum, De~Silva, and
  Langford]{tenenbaum2000global}
Joshua~B Tenenbaum, Vin De~Silva, and John~C Langford.
\newblock A global geometric framework for nonlinear dimensionality reduction.
\newblock \emph{science}, 290\penalty0 (5500):\penalty0 2319--2323, 2000.

\bibitem[Topping et~al.(2021)Topping, Di~Giovanni, Chamberlain, Dong, and
  Bronstein]{topping2021understanding}
Jake Topping, Francesco Di~Giovanni, Benjamin~Paul Chamberlain, Xiaowen Dong,
  and Michael~M Bronstein.
\newblock Understanding over-squashing and bottlenecks on graphs via curvature.
\newblock \emph{arXiv preprint arXiv:2111.14522}, 2021.

\bibitem[Verbeek and Suri(2016)]{VERBEEK20161}
Kevin Verbeek and Subhash Suri.
\newblock Metric embedding, hyperbolic space, and social networks.
\newblock \emph{Computational Geometry}, 59:\penalty0 1--12, 2016.
\newblock ISSN 0925-7721.
\newblock \doi{https://doi.org/10.1016/j.comgeo.2016.08.003}.
\newblock URL
  \url{https://www.sciencedirect.com/science/article/pii/S0925772116300712}.

\bibitem[Watts and Strogatz(1998)]{watts1998collective}
Duncan~J Watts and Steven~H Strogatz.
\newblock Collective dynamics of ‘small-world’networks.
\newblock \emph{nature}, 393\penalty0 (6684):\penalty0 440--442, 1998.

\bibitem[Wilson and Leimeister(2018)]{wilson2018gradient}
Benjamin Wilson and Matthias Leimeister.
\newblock Gradient descent in hyperbolic space.
\newblock \emph{arXiv preprint arXiv:1805.08207}, 2018.

\bibitem[Wilson et~al.(2014)Wilson, Hancock, Pekalska, and
  Duin]{wilson2014spherical}
Richard~C Wilson, Edwin~R Hancock, El{\.z}bieta Pekalska, and Robert~PW Duin.
\newblock Spherical and hyperbolic embeddings of data.
\newblock \emph{IEEE transactions on pattern analysis and machine
  intelligence}, 36\penalty0 (11):\penalty0 2255--2269, 2014.

\end{thebibliography}
\bibliographystyle{plainnat}


\appendix
\onecolumn

\section{Further notions of Riemannian geometry}\label{appendix_sec_Riemann}
The existing graph embedding strategies in symmetric manifolds as per \citet{nickel2017poincare, gu2018learning, cruceru2020computationally, lopez2021symmetric} rely on Riemannian versions of stochastic gradient descent \cite{bonnabel2013stochastic}. Accordingly, we need to compute Riemannian gradients of losses defined on manifolds. In this regard, we first review the notion of Riemannnian gradient.

\paragraph{Gradient on manifolds.} If $(M,g)$ is a Riemannian manifold, a vector field $X$ is a smooth map $X:p\mapsto X_{p}\in T_{p}M$ assigning to each point in $M$ a tangent vector. We recall that tangent vectors on a manifold represent linear differential operators, meaning that for any smooth function $f:M\rightarrow \R$ and vector field $X$, we can construct a smooth function $X(f):p\mapsto X_{p}(f) = df_{p}(X_{p})$.

Given $f:M\rightarrow\R$ smooth, the gradient of $f$ with respect to $g$ is the vector field $\nabla_{g}f$ satisfying $ df(X)|_{p} = g_{p}(X,\nabla_{g}f)$ for any vector field $X$. Given local coordinates $\{x^{i}\}$ around $p\in M$, we can express the gradient of $f$ as 
\begin{equation}\label{eq:gradient_manifold_coordinates}
(\nabla_{g}f)|_{p}^{i} = \sum_{j}(g_{p}^{-1})_{ij}\partial_{j}f(p).    
\end{equation}
\noindent Therefore, if we have a loss $\mathcal{L}$, once we compute its Euclidean gradient with respect to coordinates on $M$ and hence obtain the tangent vector $(\partial_{1}\mathcal{L},\ldots, \partial_{d}\mathcal{L})$, we need to further project it using the inverse metric $g^{-1}$.

Since many real graphs are characterized by features as power-law degree distribution and hierarchical structures that are intrinsic to hyperbolic geometry \cite{krioukov2010hyperbolic}, in most of our evaluations we let the homogeneous factor be a product of space forms containing at least one hyperbolic term. Accordingly, we first briefly review the hyperboloid model we adopt following the discussion in \citet{wilson2018gradient}.

\paragraph{Hyperboloid model.} We consider the Minkowski product on $\R^{d+1}$ defined by 
\[
\langle x, y \rangle_{d:1} := \sum_{i = 1}^{d}x_{i}y_{i} - x_{d+1}y_{d+1},
\]
\noindent with signature $(+,\ldots,+,-)$. The $d$-dimensional hyperbolic space $\mathbb{H}^{d}$ can be described as 
\[
\mathbb{H}^{d} = \left\{ x\in\R^{d+1}: \langle x, x\rangle_{d:1} = -1, x_{d+1} > 0  \right\}.
\]
\noindent The distance between two points on the hyperbolic space is given by
\[
d_{\mathbb{H}^{d}}(x,y) = \mathrm{arcosh}(-\langle x, y\rangle_{d:1}).
\]
\noindent The tangent space $T_{p}\mathbb{H}^{d}$ is isomorphic to the null directions with respect to the Minkowski product at $p$. Finally, the exponential map is given by
\[
\exp_{p}(v) = \cosh(\lvert\lvert v\rvert\rvert)p + \sinh(\lvert\lvert v\rvert\rvert)\frac{v}{\lvert\lvert v\rvert\rvert}.
\]
\noindent Assume now we have a distance-based loss $\mathcal{L}_{d}$ defined on the hyperboloid model. One first has to compute the standard gradient $\nabla\mathcal{L}$ in the ambient space $\R^{d+1}$. Then, in light of \cref{eq:gradient_manifold_coordinates} we rescale using the (inverse) Minkowski metric deriving
\[
\nabla^{d:1}\mathcal{L} = \mathrm{diag}(+,\ldots,+,-)\nabla\mathcal{L}.
\]
\noindent Finally we project $\nabla^{d:1}$ to the tangent space of the hyperboloid to derive $\nabla^{\mathbb{H}^{d}}\mathcal{L}$. 

\paragraph{Homogeneity of existing models.} In this paragraph we briefly review that the main existing models analysed recently in \cite{nickel2017poincare, gu2018learning, cruceru2020computationally, lopez2021symmetric} are indeed homogeneous and hence their curvature information cannot generally match the graph discrete one. In fact, as observed in \cite{lopez2021symmetric}, the homogeneity follows from the stronger requirement of \emph{symmetry} \cite{petersen2006riemannian}[Chapter 8.1]: symmetric spaces are characterized by the property that for each point $p$ there exists an isometry fixing $p$ whose Jacobian is minus the identity. 

In the case of graph embeddings into space forms \cite{wilson2014spherical, nickel2017poincare}, the homogeneity is a known fact and follows for example from the curvature tensor being covariantly constant. This extends to the cartesian products of such spaces analysed in \citet{gu2018learning}. In \citet{cruceru2020computationally} two Riemannian manifolds have been investigated regarding the graph embedding problem: the SPD manifold (i.e. symmetric positive definite matrices) and the Grassmannian manifold. The homogeneity is a consequence of their Lie group structure (see \citet{lin2019riemannian} and \citet{petersen2006riemannian}[Chapter 8.2] respectively). Finally another class of symmetric spaces - Siegel manifolds - have been studied in \citet{lopez2021symmetric}. In particular, we refer to the Appendix of \citet{lopez2021symmetric} for a more detailed discussion of symmetric manifolds and why they are generally advantageous in representation learning tasks \cite{lopez2021vector}.

As observed in Section 2, since the curvature of a Riemannian metric $g$ is invariant under isometries, when $g$ is homogeneous - meaning that its isometry group acts \emph{transitively} - the curvature of $g$ cannot distinguish between two given points $p,q$ otherwise it would break the invariance with respect to the isometry mapping $p$ to $q$. This key property of homogeneous spaces is arguably the main reason why they appear so frequently whenever optimization on manifolds is required: the position-independence of the curvatures makes the geometry of the space the same around each point, often leading to closed and tractable formulas for the distance function and exponential maps, which are generally unavailable. \emph{On the other hand, this rigidity comes with a price: no information about the input graph domain can be derived from the underlying continuum space and its geometry without reconstructing an adjacency skeleton on the embedded point cloud.} 

\textbf{This last point is at the heart of our work, where we are interested in encoding the geometry of the data in the actual continuum texture of the ambient space via curvature matching. To this aim, one has to consider heterogeneous manifolds with position aware curvature.}






\section{Further details on rotationally symmetric spaces}\label{sec_append_rot_sym} 
In this section we outline a few properties of the spherically symmetric manifolds adopted in our construction. We first note that a generally rotationally symmetric metric $g_{\varphi}$ on $(0,\infty)\times \mathbb{S}^{m}$ can be written as 
\[
g_{\varphi} = dr^{2} + \varphi^{2}(r)g_{\mathbb{S}^{m}}
\]
\noindent up to renaming $r$ to be the coordinate representing the distance from the singular orbit (origin in our case), as for example in \citet{angenent2004example, di2021rotationally}. The metric $g_{\varphi}$ on the open manifold $(0, +\infty)\times \mathbb{S}^{m}$ defines a smooth complete metric on $\R^{m+1}$ if $\varphi$ extends to an odd function at the origin with $\partial_{r}\varphi(0) = 1$ \cite{petersen2006riemannian}[Chapter 1.4]. 
\\The curvature information of a spherically symmetric metric on $\R^{m+1}$ is encoded in the sectional curvature $K$ of the 2-plane perpendicular to the spherical orbits and $L$ of the 2-plane tangential to such orbits \cite{petersen2006riemannian}[Chapter 3.2]:
\begin{equation}\label{eq:appendix_sect}
K = -\frac{\partial^{2}_{rr}\varphi}{\varphi}, \,\,\,\,\,\, L = \frac{1 - (\partial_{r}\varphi)^{2}}{\varphi^{2}}.
\end{equation}
\noindent By tracing we derive the formulas for the Ricci curvature and the scalar curvature (coinciding with \cref{eq:scalar_c_rs} in the case $m = 2$ used in the rest of the paper):
\begin{align}
    \mathrm{Ric} &= -m\frac{\partial^{2}_{rr}\varphi}{\varphi}dr^{2} + (-\varphi\,\partial_{rr}\varphi + (m-1)(1 - (\partial_{r}\varphi)^{2}))g_{\mathbb{S}^{m}}, \\
    \mathrm{R} &= m\left(-2 \frac{\partial^{2}_{rr}\varphi}{\varphi} + (m-1)\frac{1-(\partial_{r}\varphi)^{2}}{\varphi^{2}}\right)
\end{align}

\paragraph{Details on our explicit choice.} It follows from \cref{eq:appendix_sect} that any smooth concave function $\varphi$ with $\partial_{r}\varphi(0) = 1$ gives rise to a spherically symmetric metric with nonnegative sectional curvature $K \geq 0,$ $L\geq 0$ and hence $\mathrm{R} \geq 0$. The concavity and monotonicity of the warping function $\varphi$ characterizes strongly the geometry of these spaces and indeed has an impact for example on the type of singularities that the Ricci flow may develop \cite{angenent2004example, di2021rotationally}. 

In particular our choice in \cref{eq:explicit_varphi} reported below
\[
\varphi_{\alpha}: r \mapsto  \varphi_{\alpha}(r) = \alpha\arctan\left(\frac{r}{\alpha}\right), \quad \alpha > 0
\]
\noindent satisfies these properties.

Since $\varphi$ is odd with $\partial_{r}\varphi_{\alpha}(0) = 1$, the manifold $(\R^{m+1},g_{\varphi_{\alpha}})$ is smooth and complete. Moreover, as mentioned above $\mathrm{R}_{\alpha} \geq 0$. In fact, a standard de l'H\^{o}pital argument gives
\[
\mathrm{R}_{\alpha}(0) = \frac{12}{\alpha^{2}}.
\]
\noindent By direct computation one can also check that $\partial_{r}\mathrm{R}_{\alpha} \leq 0$ with 
\[
\inf_{r\geq 0} \mathrm{R}_{\alpha}(r) = \frac{8}{\alpha^{2}\pi^{2}}.
\]
\noindent Note how this is not surprising, since geometrically the manifold looks like a round cylinder away from the origin. In particular, we see that $\alpha$ affects the range of curvature and how positively curved the manifold is at the origin as illustrated in Figure \ref{fig:role_alpha}. We will see below how to choose $\alpha$ based on the range of curvatures on the given input graph we want to match.

\subsection{The role of rescaling.}\label{subsec_append_role_rescaling} We emphasize how the curvature \emph{monotonicity} is in general a feature helping the fitting of the Forman distribution on the graph since otherwise the gradient of the curvature-based loss $\mathcal{L}_{c}$ could get stuck at one stationary point of $\mathrm{R}_{\alpha}$. In fact, this property also allows us to have a better control and interpretability of the hyperparameters entering the model as discussed in the next section. On the other hand, since the graph is still a non-differentiable structure, it may happen that adjacent nodes have highly varying Forman curvature: equivalently, the node-wise Forman curvature may have large Dirichlet energy 
\[
E(\mathrm{F}) = \frac{1}{2}\sum_{i\sim j}\left\vert\left\vert \frac{\mathrm{F}_{i}}{\sqrt{d_{i}}} - \frac{\mathrm{F}_{j}}{\sqrt{d_{j}}} \right\vert\right\vert^{2}.
\]
\noindent Accordingly, to avoid sacrificing the distance-based loss, we grant the model an extra degree of freedom given by a rescaling $\lambda^{2}$ of the rotationally symmetric metric $g_{\varphi}$. We usually take $\lambda$ to be a contraction, meaning that the projection of the distance function on the radial directions in \cref{radialdistance} becomes $\lambda d_{g{\varphi}}((r_{0},\cdot),(r_{1},\cdot))$ hence allowing us to weight less the rotationally symmetric space in the distance-based loss $\mathcal{L}_{d}$. On the other hand the scalar curvature of the rescaled metric transforms as $\mathrm{R}_{\alpha}/\lambda^{2}$. In \cref{sec_append_algorithm} we will describe how to deal with the curvature rescaling and in general make the curvature matching component of our approach more robust to both the choice of the homogeneous factor and of the rescaling factor.

\subsection{Volume growth measurement} Here we describe how we accounted for the volume on the synthetic reconstruction task in Figure \ref{fig:volume_growth}. 
We choose our homogeneous space to be the standard 3-dimensional Hyperbolic space $(M_{h},g_{h}) = (\mathbb{H}^{3},g_{\mathbb{H}^{3}})$.

Since our embedding is spherically symmetric, instead of considering geodesic balls $B_{g}(y_{i},\rho)$ in our heterogeneous space $\mathbb{H}^{3}\heter$, we look at annular regions
\[
V(y_{i},\rho) := \{y = (z,r,\theta):\,\, d_{\mathbb{H}^{3}}^{2}(z,z_{i}) + (r-r_{i})^{2} < \rho^{2}\},
\]
\noindent so that we can explicitly compute volumes and exploit the fact that our geometry (e.g. ordering of nodes with larger volumes at a given radius) is independent of the angular coordinates. We then have
\[
v(y_{i},\rho) := \text{vol}_{g}(V(y_{i},\rho)) = \int_{(r_{i} - \rho)_{+}}^{r_{i} + \rho}\lvert B_{g_{\mathbb{H}^{3}}}(o,\sqrt{\rho^{2} - (r-r_{i})^{2}})\rvert_{g_{\mathbb{H}^{3}}}\,\omega_{2}\,\varphi_{\alpha}^{2}(r)dr,
\]
\noindent with $\omega_{2}$ the area of the 2-sphere and $o$ some fixed point of our hyperbolic model. Say we consider the rotationally symmetric model for the three-dimensional hyperbolic space, then
\begin{equation}\label{eq:volume_growth_formula}
v(y_{i},\rho) = (\omega_{2})^{2}\int_{(r_{i} - \rho)_{+}}^{r_{i} + \rho}\left(\int_{0}^{\sqrt{\rho^{2} - (r-r_{i})^{2}}}\sinh^{2}(z)dz\right)\varphi_{\alpha}^{2}(r)dr.
\end{equation}
\noindent This is the quantity we use to match the volume reconstruction of the graph as in Figure \ref{fig:volume_growth}. 




\section{Details of the algorithm}\label{sec_append_algorithm} We recall the setting we are interested in: assume we want to embed a graph into a heterogeneous space of the form $M_{h}\heter$, for some homogeneous space $M_{h}$. We first prove \cref{lemma:update} stating that to apply R-SGD to our curvature-aware loss $\mathcal{L} = \mathcal{L}_{d} + \tau\mathcal{L}_{c}$ defined in \cref{eq:loss_defn} we only need to account for one additional radial derivative.

\begin{proof}[Proof of Proposition \ref{lemma:update}]
Let $(M_{h},g_{h})$ be a chosen homogeneous manifold. Suppose we have $f:M\rightarrow \R$ smooth with $M = M_{h}\heter$, meaning that we consider the heterogeneous manifold $(M,g) = (M_{h},g_{h})\times (\R^{3},g_{\varphi})$, with $g_{\varphi}$ as in \cref{eq:rot_symmetric_metric} for some smooth radial map $\varphi$. Assume that $f$ is $SO(3)$-invariant, i.e. that given $z\in M_{h}$ and $r > 0$ we have
\[
f(z,r,\theta_{0}) = f(z,r,\theta_{1}),\,\,\,\,\, \forall \theta_{0},\theta_{1}\in \mathbb{S}^{2}.
\]
\noindent We note that this is the case for our loss $\mathcal{L}$ in \cref{eq:loss_defn} which is independent of angular coordinates. Since $g$ is a product metric the tangent space of $M$ is the direct sum of the tangent spaces of the individual factors and we can write the gradient of $f$ as
\[
\nabla_{g}f(z_{0},r_{0},\theta_{0}) = \left(\nabla_{g_{h}}f(\cdot,r_{0},\theta_{0})\right)(z_{0}) \oplus \left(\nabla_{g_{\varphi}}f(z_{0},\cdot,\cdot)\right)(r_{0},\theta_{0}).
\]
\noindent Since $f$ is spherically symmetric, the right hand side becomes
\[
\nabla_{g}f(z_{0},r_{0},\theta_{0}) = \left(\nabla_{g_{h}}f(\cdot,r_{0},\theta_{0})\right)(z_{0}) \oplus \partial_{r}f(z_{0},\cdot,\theta_{0})(r_{0})\partial_{r},
\]
\noindent where we have also used \cref{eq:gradient_manifold_coordinates} and that the inverse metric $g_{\varphi}^{-1}$ writes as 
\[
g_{\varphi}^{-1} = \begin{pmatrix}
1 & 0 \\ 0 & g_{\mathbb{S}^{2}}^{-1}
\end{pmatrix}.
\]
\noindent 

On the other hand, if $\nu\in \R$, the unique $g_{\varphi}$-radial geodesic starting at some $(r_{0},\theta_{0})$ with initial tangent vector $\nu\partial_{r}$ is 
\[
t \mapsto (r_{0} + t\nu, \theta_{0}),
\]
\noindent meaning that the exponential map is always defined at $\nu\partial_{r}$ and is given explicitly by
\[
(\exp_{g_{\varphi}})|_{(r_{0},\theta_{0})}(\nu\partial_{r}) = ((r_{0} + \nu)_{+},\theta_{0}).
\]
\noindent Therefore, we may apply \cref{exponentialformula} and conclude that
\[
(\exp_{g})_{(z_{0},r_{0},\theta_{0})}(\nabla_{g}f) = \left((\exp_{g_{h}})|_{z_{0}}(\nabla_{g_{h}}f(\cdot,r_{0},\theta_{0})(z_{0})),(r_{0}+(\partial_{r}f(z_{0},\cdot,\theta_{0}))(r_{0}))_{+},\theta_{0}\right).
\]
\noindent If we apply the previous computation to each component of $\mathcal{L}$, we then get the update rule for the R-SGD algorithm.
\end{proof}

\subsection{Matching curvature up to invertible linear maps}\label{subsection_append_linearmaps} Next, we discuss how we allowed our algorithm to have an extra degree of freedom to more easily account for distances and curvatures simultaneously. As mentioned in \cref{subsec_append_role_rescaling}, in general the curvature of the heterogeneous model $M_{h}\times\lambda^{2}\mathcal{R}$ writes as
\[
\mathrm{R} = \mathrm{R}_{h} + \frac{1}{\lambda^{2}}\mathrm{R}_{\alpha}
\]
\noindent where we are using the explicit choice in \cref{eq:explicit_varphi} for the spherically symmetric factor with $\lambda > 0$ a positive rescaling on the rotationally symmetric space introduced above. In general, we wish to make our model robust with respect to the choice of the homogeneous factor given that it only leads to a constant value $\mathrm{R}_{h}$ translating the global curvature of our ambient space. Similarly, the role of the rescaling should not affect how we match the node-wise curvature distribution. Accordingly, we propose to reconstruct the curvature information at node $x_{i}$ by the curvature on the manifold at the embedded point $y_{i} = (z_{i},r_{i})$ up to a known shifting and rescaling. It means that for our curvature-based loss we instead minimize
\begin{equation}\label{eq:append_translation_curv_loss}
\mathcal{L}_{c}(y_{1},\ldots,y_{n}) = \sum_{i}\left( \mathrm{F}(x_{i}) - \lambda^{2}\left(\mathrm{R}_{h} + \frac{\mathrm{R}_{\alpha}(r_{i})}{\lambda^{2}}\right) + \rho\right)^{2},
\end{equation}
\noindent where we take the translation $\rho$ of the form
\[
\rho = \lambda^{2}\mathrm{R}_{h} - \min{\mathrm{F}} + \hat{\delta}_{\alpha},
\]
\noindent with $\min\mathrm{F}$ the minimum of scalar Forman on the given graph $G$ and $\hat{\delta}_{\alpha}$ a constant we discuss in the next section. Therefore, \cref{eq:append_translation_curv_loss} becomes
\begin{equation}\label{eq:appendix_curv_loss_no_hom}
\mathcal{L}_{c}(y_{1},\ldots,y_{n}) = \sum_{i}\left( \mathrm{F}(x_{i}) - \mathrm{R}_{\alpha}(r_{i}) -\min\mathrm{F} + \hat{\delta}_{\alpha}\right)^{2}
\end{equation}
\noindent making it \textbf{independent} of the choice of the homogeneous factor $M_{h}$. It remains to discuss the role of $\hat{\delta}_{\alpha}$ and in general how by using the geometry of the problem we can tune two hyperparameters to ensure the curvature matching.

\subsection{Tuning geometric hyperparameters}\label{appendix_hyperparameters} To allow the manifold scalar curvature to fit the node-wise Forman signal, we see from \cref{eq:appendix_curv_loss_no_hom} that a necessary requirement is considering range of curvatures that cover the interval $\max \mathrm{F} - \min\mathrm{F}$. Since by choice $\mathrm{R}_{\alpha}$ is \emph{monotone}, we immediately see that the $\mathrm{argmin}\mathrm{F}$ on the graph should be mapped to the radial coordinate $r_{\min}$ where $\mathrm{R}_{\alpha}(r_{\min}) = \hat{\delta}_{\alpha}$. We know that given $\alpha$, the function $\mathrm{R}_{\alpha}$ admits a horizontal asymptote given by $8/(\pi\alpha)^{2}$, therefore we find the constraint
\begin{equation}\label{eq:append_constraint}
\hat{\delta}_{\alpha} \geq \frac{8}{\pi^{2}\alpha^{2}}.
\end{equation}
\noindent On the other hand, a symmetric argument works for $\max\mathrm{F}$: in fact, if we denote by $r_{\max}$ the radial coordinate we should map $\mathrm{argmax}\,\mathrm{F}$ to, by monotonicity we have the constraint
\[
\frac{12}{\alpha^{2}} = \mathrm{R}_{\alpha}(0) = \mathrm{R}(r_{\max}) + \ell_{+}= \max\mathrm{F} - \min\mathrm{F} + \hat{\delta}_{\alpha} + \ell_{+},
\]
\noindent with $\ell_{+}$ our first hyperparameter \emph{controlling how close to the origin of our spherically symmetric factor we need to be to match the maximum of Forman on the input graph}. In particular, given $\ell_{+}$, the constraint in \cref{eq:append_constraint} yields:
\[
\hat{\delta}_{\alpha} = \frac{2}{3\pi^{2} - 2}\left(\max\mathrm{F} - \min\mathrm{F} + \ell_{+}\right) + \delta
\]
\noindent where $\delta > 0$ is our second hyper-parameter determining what is the range of radial coordinates needed for the curvature matching, since the smaller $\delta$ the closer to its asymptote $\mathrm{R}_{\alpha}$ must be to take on the value $\min\mathrm{F}$. In conclusion, the choice of $\alpha$ is affected by two geometric hyperparameters $\ell_{+}, \delta$ and is a function of the range of Forman curvatures on the given graph we want to embed in our heterogeneous model:
\[
\alpha = \left(\frac{12}{\max\mathrm{F} - \min\mathrm{F} + \delta + \ell_{+}}\right)^{\frac{1}{2}}.
\]

\section{Additional details on Experiments}\label{sec_details_exp}
In this section we expand on our evaluation section further commenting on methods adopted and reporting additional plots.

\subsection{Synthetic experiment}
We consider an embedding of the graph in Figure \ref{fig:volume_growth} into $\mathbb{H}^{3}\heter$. Note that the choice of the graph is emblematic of heterogeneous pattern since nodes inside the cycle would have constant volume growth while nodes in the tree region will have exponential volume growth. 

We fix a radius $\rho = 4$ and we compute the volume of each `geodesic' ball inside a graph, i.e. $\lvert\{x_{j}\in V: d_{G}(x_{i},x_{j})\leq \rho\}\rvert$. We then use the spherical symmetry of our ambient space and the formula in \cref{eq:volume_growth_formula} with $\rho = 4$ and $y_{i}$ given by our embedded nodes. Once we normalize the volume scores on both the graph and the ambient space, we can finally compare the results as in Figure \ref{fig:volume_growth}. We emphasize again that on a homogeneous manifold this information cannot be accessed from the actual continuum space since $\lvert B_{g_{h}}(p,r) \rvert$ is only a function of the radius but not of the base-point if $(M_{h},g_{h})$ is homogeneous.

\subsection{Reconstruction tasks} We summarize here additional details concerning the methods and results of \cref{table_reconstruction}. We use the model proposed in \citet{gu2018learning} to possibly learn optimal constant rescaling of the homogeneous factors $\mathbb{H}^{5}\times\mathbb{H}^{5}$ and $\mathbb{H}^{5}\times \mathbb{S}^{5}$ and we consider a training of 3000 epochs for each dataset and ambient space. Typical values of the scale parameter $\tau$ in the loss are $0.1, 0.01$, noting that this allows to minimize the curvature distortion without penalizing the distance-based one. 
In terms of hyperparameters $\ell_{+}, \delta$ introduced in \cref{appendix_hyperparameters} we usually take large values (especially for the dense network Facebook) in the range $10, 100,1000$ which allow to avoid plateau regions of the scalar curvature profile and hence make the learning easier. This is also accounted for the initialization of the radial coordinate since once again we want to avoid flat regions of the curvature profile: since we have an explicit formula for the curvature this can be done efficiently (usually the initialization is for $r \in (0.1, 1)$). 

\paragraph{Triangle distortion.} Given the point clouds found by the embedding into the manifold, we reconstruct the adjancency matrix as follows: we draw an edge between a pair of nodes $(i,j)$  if the distance between the corresponding embedded points  $y_i$, $y_j$ is lower than a certain threshold $\rho$, i.e. if  $d_M(y_i, y_j) \leq \rho$. Self-loops are then removed. The threshold $\rho$ is tuned on a validation set that is built drawing randomly $10\%$ of the nodes of the dataset. The tuning aims at minimizing the reconstruction error between the reconstructed and real graph: given $A$ the adjacency matrix of the graph and $A_\rho$ the adjacency matrix associated with the $\rho$-nearest neighbour graph, we tune $\rho$ to minimize $\Vert A - A_\rho\Vert$ on the validation set. More advanced ways for graph reconstruction and link predictions exist in the literature (see for example \cite{nickel2017poincare}). 

Given our best nearest neighbour reconstruction adjacency $A_{\rho}$ and our manifold curvature values $\mathrm{R}(y_{i})$ we reconstruct the number of triangles using \cref{eq:Ricci_Forman}, \cref{eq:Forman_scal} where the true adjacency is replaced by the reconstructed one. Explicitly:
\[
(3\cdot 2)\gamma\sharp_{\Delta}(i) = d_{\rho}(i)\mathrm{R}(y_{i}) - \sum_{j}(A_{\rho})_{ij}(4 - d_{\rho}(i) - d_{\rho}(j)), 
\]
\noindent where the extra 2 factor on the LHS derives from counting each triangle twice in the formula $2\sharp_{\Delta}(i) = \sum_{j\sim i}\sharp_{\Delta}(i,j)$. For the results reported in \cref{table:triangle_append} we take $\gamma = 4$ in the weighting of triangles. We also note that the $0\%$ improvement over the CS-PhD dataset is to ascribe to the extremely low density of the true graph (with only 4 triangles overall): both methods reach low average distortion - albeit in this case not highly indicative. 


\textbf{Curvature correction}. Once the reconstructed adjacency $A_\rho$ (and hence a reconstructed graph $G_\rho$) is available, one can compute the node-wise Forman curvature, $\mathrm{F}_\rho (i)$ with $i\in V$. Since in our embeddings the curvature on the manifold $\mathrm{R}$ is a good proxy of the curvature of the graph, one can use the discrepancy $\vert \mathrm{R}(y_i) - \mathrm{F}_\rho(i)\vert$ to identify the points where the reconstruction is poor. Indeed, the discrepancy $\vert \mathrm{R}(y_i) - \mathrm{F}_\rho(i)\vert$ translate the quality of the reconstruction of the 2-hop neighborhood of the node $i$, by definition of Forman curvature. \emph{How to best exploit this additional information in reconstruction tasks and link prediction is of interest on its own and goes beyond the scope of the work}. However, we conducted preliminary experiments resorting on a simple curvature correction that acts as follows: 
\begin{itemize}
    \item Compute $err_i = \vert \mathrm{R}(y_i) - \mathrm{F}_\rho(i)\vert$ for each node $i$
    \item Identify the nodes where the error $err_i$ is bigger, e.g. the nodes where $err_i$ is above the $90\%$ percentile. 
    \item For these nodes, increase / decrease the distance threshold that governs the edge, obtaining a new graph $G'_\rho$ that is locally modified from the reconstructed graph $G_\rho$. 
    \item Compute the curvature of nodes of the new graph $G'_\rho$ and compare it with the curvature of the corresponding points on the manifolds. If the discrepancy between the curvature decreases, accept the change. Otherwise reject it. 
\end{itemize}

\noindent We have tested this method on WebEdu attaining a $22.6 \%$ improvement in the reconstruction (see \cref{fig:web_edu_recon}).

 \begin{table}
 \centering
\begin{tabular}{ |c||c|c|c|c|  }
 \hline
\multicolumn{5}{|c|}{\textbf{Triangle distortion}} \Tstrut\Bstrut\\
\hline
 \multicolumn{1}{|c||}{ } & \multicolumn{1}{|c|}{\textbf{Aves-Wildbird} } &  \multicolumn{1}{|c|}{\textbf{CS-PhD}} & \multicolumn{1}{|c|}{\textbf{WebEdu}} & 
 \multicolumn{1}{|c|}{\textbf{Facebook}} \Tstrut\Bstrut\\
  \multicolumn{1}{|c||}{ $|\Delta|$, avg $\Delta_i$ } & \multicolumn{1}{|c|}{ 9270, 70.76  } &  \multicolumn{1}{|c|}{ 4, 0.004 } & \multicolumn{1}{|c|}{ 10058, 3.31  } & 
 \multicolumn{1}{|c|}{ 1612010, 399.2 } \Tstrut\Bstrut\\
  \hline

$\mathbb{H}^5\times \mathbb{H}^5$ & 0.212 & 0.004 & 0.658 & 0.518\\[2pt]

$\mathbb{H}^5\times \mathbb{H}^5\!\heter\! $ & 0.108 & 0.004 & 0.369 & 0.414\\
  \hline

\end{tabular}
\caption{\label{table:triangle_append} Performance on the triangle counting task.}
\end{table}

\subsection{Manifold random graphs} Here we comment more on our random graph generation. We consider a three dimensional hyperbolic space $\mathbb{H}^{3}$ and we follow the sampling procedure adopted for example in \citet{cruceru2020computationally}: namely, one samples uniformly tangent vectors at some fixed point (this is not important due to homogeneity) and then use the exponential map to generate points inside the manifold. We observe that this approach is biased since it does not account for the underlying geometry (i.e. the Riemannian measure) but only sees the `flat' geometry of the tangent spaces. Nonetheless, for our purposes of random generation we prefer to stick to this easier uniform tangent sampling. In fact, if we actually encoded the hyperbolic Riemannian measure, then the sampling would have an even more manifest scale-free profile (since points on the Poincar\'{e} disk would be on average closer to the boundary) as shown in \citet{cruceru2020computationally}. 

Once we have a point cloud inside the Cartesian product of the Poincar\'{e} disk (and our spherically symmetric extension $\mathbb{H}^{3}\heter$), we construct the \emph{nearest neighbour graph} $G$ with adjacency $A$ using a \textbf{distance threshold} $\rho$, meaning that $A_{ij} = 1$ if points $y_{i}$ and $y_{j}$ are at geodesic distance smaller or equal than $\rho$. In general, graphs uniformly sampled from a hyperbolic geometry without accounting for heterogeneous curvature will exhibit small-world network features as power-law degree distribution \cite{krioukov2010hyperbolic}, however they will generally lack community structure (cliques). We then set the following:

\textbf{Goal:} \emph{Sample random graphs of 500 nodes that have one large community (as measured by the existence of a clique of $\sim 45-50$  nodes) while preserving the scale-free behaviour of the density (degree).} 

For the statistics reported below we sample 100 random graphs for each given threshold. 

\paragraph{Approach one: Increase the distance threshold $\rho$} In one case, where we simply sample points from the hyperbolic space, we consider increasing thresholds $\rho$ to improve the average density. While this allows for formation of dense community structures and achieves higher mean clustering, the higher density is distributed uniformly on the graph due to the homogeneity of the continuous manifold. In fact, the variance of the degree increases dramatically too, ultimately resulting in a failure to preserve the scale-free property when arriving to large cliques. On the other hand, the variance of the clustering coefficient decreases by more than 50 $\%$, highlighting how now in most of the graph the probability of triangle formation is uniformly high. This is all summed up in the statistics reported in \cref{table_generated_graph_stats_dist}.

\begin{table}[h!]
\begin{tabular}{|l|lllll|}
\hline
Homogeneous           & $\rho =1$            & $\rho =1.2$          & $\rho =1.5$           & $\rho =1.7$           & $\rho =2$         \\ \hline
variance degree     & 6.79 $\pm$ 1.85 & 10.72 $\pm$ 3.05 & 18.27 $\pm$ 3.99 & 23.71 $\pm$ 5.76 & 40.36 $\pm$ 8.95  \\ \hline
mean degree         & 7.33 $\pm$ 1.47 & 11.34 $\pm$ 2.73 & 19.33 $\pm$ 3.70 & 25.94 $\pm$ 5.64 & 47.24 $\pm$ 10.23 \\ \hline
variance clustering & 0.29 $\pm$ 0.01 & 0.27 $\pm$ 0.01  & 0.22 $\pm$ 0.01  & 0.19 $\pm$ 0.01  & 0.15 $\pm$ 0.01   \\ \hline
mean clustering     & 0.42 $\pm$ 0.02 & 0.53 $\pm$ 0.02  & 0.63 $\pm$ 0.01  & 0.67 $\pm$ 0.01  & 0.73 $\pm$ 0.01   \\ \hline
size largest clique & 10.6 $\pm$ 1.97 & 15.28 $\pm$ 3.18 & 23.84 $\pm$ 4.58 & 29.22$ \pm$ 6.07 & 49.96 $\pm$ 9.39  \\ \hline
\end{tabular}
\caption{\label{table_generated_graph_stats_dist} Statistics of the random graphs sampled from the homogeneous model $\mathbb{H}^{3}$ for different \emph{distance thresholds}. The formation of dense community structure can only occur uniformly at the cost of the scale-free property of the networks.}
\end{table}



\paragraph{Approach two: increase the connectivity among positively curved points} For the point cloud sampled uniformly\footnote{We point out here that the sampling occurs in a compact region. In the case of $\mathcal{R}$ we consider radial coordinates sampled in the interval $(0,2)$.} in $\mathbb{H}^{3}\heter$ we can also leverage the curvature information, meaning that now differently from the homogeneous space to each point $y_{i}$ we can also associate position-dependent curvature information $\mathrm{R}_{\alpha}(y_{i})$. In particular we fix $\alpha$ in \cref{eq:explicit_varphi} and some curvature threshold $\ell$ and assign a connection between any pair of points sampled from $\mathbb{H}^{3}\heter$ with both curvatures larger than $\ell$ and within a distance threshold $\rho$ we now vary again as above. We report the results in \cref{table_generated_graph_stats_curv}: we point out that now we reach a large dense clique (community) structure while still preserving the scale-free profile as shown in the degree distribution in \cref{fig:randomgraph_deg_curv}, the mean degree and its variance. The degree distribution is representative of the 100 runs, as it is obtained computing the average (or barycenter) of the degree distributions of all runs using Wasserstein distance. Wasserstein distance is sensitive to the shape and geometry of the probability distributions and therefore particularly suitable to compute the average of histograms, preserving their shape (\cite{agueh2011barycenters, cuturi2014fast, luise2019sinkhorn}).   Moreover, we observe how the variance of the clustering coefficient has not been affected significantly meaning that our sampling has managed to give rise to a highly heterogeneous density distribution. This is just a simple application of how heterogeneous manifolds could potentially be used to generate believable graphs that share many properties with real ones.

\begin{table}[H]
\begin{tabular}{|l|lllll|}
\hline
Heterogeneous           & $\rho =  1$           & $\rho =2.5$          & $\rho = 4$            & $\rho = 5.5 $          & $\rho = 7.0 $          \\ \hline
variance degree     & 6.34 $\pm$ 1.7  & 8.26 $\pm$ 2.8   & 12.9 $\pm$ 3.2   & 16.21 $\pm$ 2.7  & 19.4 $\pm$ 3.51  \\ \hline
mean degree         & 7.09 $\pm$ 1.37 & 8.46 $\pm$ 2.20  & 10.99 $\pm$ 2.58 & 12.33 $\pm$ 1.81 & 14.06 $\pm$ 2.47 \\ \hline
variance clustering & 0.28 $\pm$ 0.01 & 0.28 $\pm$ 0.01  & 0.28 $\pm$ 0.01  & 0.29 $\pm$ 0.01  & 0.30 $\pm$ 0.01  \\ \hline
mean clustering     & 0.42 $\pm$ 0.02 & 0.43 $\pm$ 0.02  & 0.45 $\pm$ 0.02  & 0.46 $\pm$ 0.02  & 0.47 $\pm$ 0.02  \\ \hline
size largest clique & 10.6 $\pm$ 2.29 & 11.94 $\pm$ 3.17 & 24.9 $\pm$ 6.31  & 35.9 $ \pm$ 5.72 & 47.2 $\pm$ 8.24  \\ \hline
\end{tabular}
\caption{\label{table_generated_graph_stats_curv} Statistics of the random graphs sampled from our heterogeneous model $\mathbb{H}^{3}\heter$ for different \emph{curvature thresholds}. We note how compared to the case where sampled points all come with same curvature from the ambient space, in this case we can attain large clique sizes without affecting significantly the variance of the clustering and of the degree distribution.}
\end{table}

\end{document}